\title{Sparsified SGD with Memory}
\newtheorem{theorem}{Theorem}[section]
\newtheorem{lemma}[theorem]{Lemma}
\newtheorem{definition}[theorem]{Definition}
\newtheorem{remark}[theorem]{Remark}
\newcommand{\dataset}[1]{\emph{#1}}
  \renewcommand{\aa}{\mathbf{a}}
  \providecommand{\bb}{\mathbf{b}}
  \let\ggg\gg
  \renewcommand{\gg}{\mathbf{g}}
  \let\lll\ll
  \renewcommand{\ll}{\mathbf{l}}
  \providecommand{\mm}{\mathbf{m}}
  \providecommand{\xx}{\mathbf{x}}
  \providecommand{\yy}{\mathbf{y}}
  \providecommand{\cO}{\mathcal{O}}
\DeclareMathOperator{\topk}{top}
\DeclareMathOperator{\randk}{rand}
\newcommand{\R}{\mathbb{R}}
\newcommand{\abs}[1]{\left\lvert #1\right\rvert}
\newcommand{\norm}[1]{\left\lVert #1\right\rVert}
\DeclareMathOperator{\E}{\mathbb{E}}
\DeclareMathOperator*{\argmin}{arg\,min}
\newcommand{\lin}[1]{\ensuremath \left\langle #1 \right\rangle}
\author{
  Sebastian U. Stich\\
  \And
  Jean-Baptiste Cordonnier\\
  \And
  Martin Jaggi\\
  \and
  ~\\
  Machine Learning and Optimization Laboratory (MLO)\\
  EPFL, Switzerland
}
\begin{document}

\maketitle

\begin{abstract}
  Huge scale machine learning problems are nowadays tackled by distributed optimization algorithms, i.e. algorithms that leverage the compute power of many devices for training.
  The communication overhead is a key bottleneck that hinders perfect scalability. Various recent works proposed to use quantization or sparsification techniques to reduce the amount of data that needs to be communicated, for instance by only sending the most significant entries of the stochastic gradient (top-$k$ sparsification). Whilst such schemes showed very promising performance in practice, they have eluded theoretical analysis so far.

In this work we analyze Stochastic Gradient Descent (SGD)
 with $k$\nobreakdash-sparsification or compression (for instance top-$k$ or random-$k$)
 and show that this scheme converges at the same rate as vanilla SGD when equipped with error compensation (keeping track of accumulated errors in memory). %
 That is, communication can be reduced by a factor of the dimension of the problem (sometimes even more) whilst still converging at the same rate.
 We present numerical experiments to illustrate the theoretical findings and the good scalability for distributed applications.
\end{abstract}

\section{Introduction}
Stochastic Gradient Descent (SGD)~\cite{Robbins:1951sgd} and variants thereof (e.g.~\cite{duchi2011adagrad,Kingma2014:adam}) are among the most popular optimization algorithms in machine- and deep-learning~\cite{Bottou2010:sgd}. SGD consists of iterations of the form
\begin{align}
 \xx_{t+1} &:= \xx_t - \eta_t \gg_t\,, \label{eq:sgd}
\end{align}
for iterates $\xx_t,\xx_{t+1} \in \R^d$, stepsize (or learning rate) $\eta_t > 0$, and stochastic gradient $\gg_t$ with the property $\E[ \gg_t ] = \nabla f(\xx_t)$, for a loss function $f \colon \R^d  \to \R$.
SGD addresses the computational bottleneck of full gradient descent, as the stochastic gradients can in general be computed much more efficiently than a full gradient $\nabla f(\xx_t)$. However, note that in general both $\gg_t$ and $\nabla f(\xx_t)$ are \emph{dense} vectors\footnote{
Note that the stochastic gradients $\gg_t$ are dense vectors for the setting of training neural networks. The $\gg_t$ themselves can be sparse for generalized linear models under the additional assumption that the data is sparse.
} of size $d$, i.e. SGD does not address the communication bottleneck of gradient descent, which occurs as a roadblock both in distributed as well as parallel training. In the setting of distributed training, communicating the stochastic gradients to the other workers is a major limiting factor for many large scale (deep) learning applications, see e.g.~\cite{Seide2015:1bit,Alistarh2017:qsgd,Zhang2017:zipml,Lin2018:deep}. The same bottleneck can also appear for parallel training, e.g. in the increasingly common setting of a single multi-core machine or device, where locking and bandwidth of memory write operations for the common shared parameter $\xx_t$ often forms the main bottleneck, see e.g.~\cite{Niu2011:hogwild,hsieh2015passcode,asaga}.

A remedy to address these issues seems to \emph{enforce} applying smaller and more efficient updates $\operatorname{comp}(\gg_t)$ instead of $\gg_t$, where $\operatorname{comp} \colon \R^d \to \R^d$ generates a compression of the gradient, such as by lossy quantization or sparsification. We discuss different schemes below.
However, too aggressive compression can hurt the performance, unless it is implemented in a clever way: 1Bit-SGD~\cite{Seide2015:1bit,Strom2015:1bit} combines gradient quantization with an error compensation technique, which is a memory or feedback mechanism.
We in this work leverage this key mechanism but apply it within more the more general setting of SGD.
We will now sketch how the algorithm uses feedback of to correct for  errors accumulated in previous iterations.
Roughly speaking, the method keeps track of a memory vector $\mm$ which contains the sum of the information that has been suppressed thus far, i.e. $\mm_{t+1} := \mm_t +\gg_t - \operatorname{comp}(\gg_t)$,
and injects this information back in the next iteration, by transmitting $\operatorname{comp}(\mm_{t+1} + \gg_{t+1})$ instead of only $\operatorname{comp}(\gg_{t+1})$. Note that updates of this kind are not unbiased (even if $\operatorname{comp}(\gg_{t+1})$ would be) and there is also no control over the delay after which the single coordinates are applied. These are some (technical) reasons why there exists no theoretical analysis of this scheme up to now.

In this paper we give a concise convergence rate analysis for SGD with memory and $k$-compression operators\footnote{See Definition~\ref{def:kcontract}.}, such as (but not limited to) top-$k$ sparsification. Our analysis also supports ultra-sparsification operators for which $k < 1$, i.e. where \emph{less than one} coordinate of the stochastic gradient is applied on average in~\eqref{eq:sgd}. We not only provide the first convergence result of this method, but the result also shows that the method converges \emph{at the same rate} as vanilla SGD.

\subsection{Related Work}
There are several ways to reduce the communication in SGD. For instance by simply increasing the amount of computation before communication, i.e. by using large mini-batches (see e.g.~\cite{You2017:scaling,Goyal2017:large}), or by designing communication-efficient schemes~\cite{Zhang2012:communication}. These approaches are a bit orthogonal to the methods we consider in this paper, which focus on quantization or sparsification of the gradient.

Several papers consider approaches that limit the number of bits to represent floating point numbers~\cite{Gupta:2015limited,Na2017:limitedprecision,Sa2015taming}.
Recent work proposes adaptive tuning of the compression ratio~\cite{Chen2018:adacomp}. Unbiased quantization operators not only limit the number of bits, but quantize the stochastic gradients in such a way that they are still unbiased estimators of the gradient~\cite{Alistarh2017:qsgd,Wen2017:terngrad}. The ZipML framework also applies this technique to the data~\cite{Zhang2017:zipml}.
Sparsification methods reduce the number of non-zero entries in the stochastic gradient~\cite{Alistarh2017:qsgd,Wangni2018:sparsification}. %

A very aggressive sparsification method is to keep only very few coordinates of the stochastic gradient by considering only the coordinates with the largest magnitudes~\cite{Dryden2016:topk,Aji2017:topk}. In contrast to the unbiased schemes it is clear that such methods can only work by using some kind of error accumulation or feedback procedure, similar to the one the we have already discussed~\cite{Seide2015:1bit,Strom2015:1bit}, as otherwise certain coordinates could simply never be updated. However, in certain applications no feedback mechanism is needed~\cite{Sun2017:sparse}. Also more elaborate sparsification schemes have been introduced~\cite{Lin2018:deep}.

Asynchronous updates provide an alternative solution to disguise the communication overhead to a certain amount \cite{Leblond:2018uv}. However, those methods usually rely on a sparsity assumption on the updates~\cite{Niu2011:hogwild,Sa2015taming}, which is not realistic e.g. in deep learning. We like to advocate that combining gradient sparsification with those asynchronous schemes seems to be a promising approach, as it combines the best of both worlds. Other scenarios that could profit from sparsification are heterogeneous systems or specialized hardware, e.g. accelerators~\cite{Dunner2017:gpu,Zhang2017:zipml}.

Convergence proofs for SGD~\cite{Robbins:1951sgd} typically rely on averaging the iterates~\cite{Ruppert1988:efficient,Polyak1992:averaging,Moulines2011:nonasymptotic}, though convergence of the last iterate can also be proven~\cite{Shamir2013:averaging}. For our convergence proof we rely on averaging techniques that give more weight to more recent iterates~\cite{Lacoste2012:simpler,Shamir2013:averaging,Rakhlin2012suffix},
as well as the perturbed iterate framework from Mania et al.~\cite{Mania2017:perturbed} and techniques from~\cite{asaga,Stich2018:localSGD}.

Simultaneous to our work, \cite{alistarh2018nips,tang2018decentralized} at NIPS 2018 propose related schemes.
Whilst Tang et al.~\cite{tang2018decentralized} only consider unbiased stochastic compression schemes,
Alistarh et al.~\cite{alistarh2018nips} study biased top-$k$ sparsification. Their scheme also uses a memory vector to compensate for the errors, but their analysis suffers from a slowdown proportional to
$k$, which we can avoid here.
Another simultaneous analysis of Wu et al.~\cite{Wu:2018vt} at ICML 2018 is restricted to unbiased gradient compression.
This scheme also critically relies on an error compensation technique, but in contrast to our work the analysis is restricted to quadratic functions and the scheme introduces two additional hyperparameters that control the feedback mechanism.

\subsection{Contributions}
We consider finite-sum convex optimization problems $f \colon \R^d \to \R$ of the form
\begin{align}
 f(\xx) &= \frac{1}{n} \sum_{i=1}^n f_i(\xx)\,, &
 \xx^\star &:= \argmin_{\xx \in \R^d}\, f(\xx)\,, &
 f^\star &:= f(\xx^\star)\,,
\end{align}
where each $f_i$ is $L$-smooth\footnote{$f_i(\yy) \leq f_i(\xx) + \lin{\nabla f_i(\xx),\yy-\xx}+\frac{L}{2}\norm{\yy-\xx}^2$, $\forall \xx,\yy \in \R^d$, $i \in [n]$.} and $f$ is $\mu$-strongly convex\footnote{$f(\yy) \geq f(\xx) + \lin{\nabla f(\xx),\yy-\xx}+\frac{\mu}{2}\norm{\yy-\xx}^2$, $\forall \xx,\yy \in \R^d$.}. We consider a sequential sparsified SGD algorithm with error accumulation technique and prove convergence for $k$-compression operators, $ 0 <  k \leq d$ (for instance the sparsification operators top-$k$ or random-$k$). For appropriately chosen stepsizes and an averaged iterate $\bar{\xx}_T$ after $T$ steps we show convergence
\begin{align}
 \E f(\bar{\xx}_T) - f^\star = \mathcal{O} \left(\frac{G^2}{\mu T}\right) + \mathcal{O} \left( \frac{\frac{d^2}{k^2} G^2 \kappa }{\mu T^2} \right) + \mathcal{O} \left(\frac{ \frac{d^3}{k^3} G^2}{\mu T^3}\right)\,,
\end{align}
for $\kappa = \frac{L}{\mu}$ and $G^2 \geq \E \norm{\nabla f_i(\xx_t)}^2$. Not only is this, to the best of our knowledge, the first convergence result for sparsified SGD with memory, but the result also shows that for $T =\Omega (\frac{d}{k} \sqrt{\kappa})$ %
the convergence rate is the same as for vanilla SGD, i.e. $\mathcal{O}\bigl(\frac{G^2}{\mu T}\bigr)$. %

We introduce the method formally in Section~\ref{sec:algorithm} and show a sketch of the convergence proof in Section~\ref{sec:theory}. In Section~\ref{sec:experiments} we include a few numerical experiments for illustrative purposes. The experiments highlight that top-$k$
sparsification yields a very effective compression method and does not hurt convergence.
We also report results for a parallel multi-core implementation of SGD with memory that show that the algorithm scales as well as asynchronous SGD and drastically decreases the communication cost without sacrificing the rate of convergence.
We like to stress that the effectiveness of SGD variants with sparsification techniques has already been demonstrated in practice~\cite{Dryden2016:topk,Aji2017:topk,Lin2018:deep,
Strom2015:1bit,Seide2015:1bit}.

Although we do not yet provide convergence guarantees for parallel and asynchronous variants of the scheme, this is the main application of this method.
 For instance, we like to highlight that asynchronous SGD schemes~\cite{Niu2011:hogwild,alistarh2018async} could profit from the gradient sparsification. To demonstrate this use-case, we include in Section~\ref{sec:experiments} a set of experiments for a  multi-core implementation.

\section{SGD with Memory}
\label{sec:algorithm}
In this section we present the sparsified SGD algorithm with memory. First we introduce sparsification and quantization operators which allow us to drastically reduce the communication cost in comparison with vanilla SGD.

\subsection{Compression and Sparsification Operators}
We consider compression operators that satisfy the following contraction property:
\begin{definition}[$k$-contraction]\label{def:kcontract}
For a parameter $0 < k \leq d$, a \emph{$k$-contraction operator} is a (possibly randomized) operator $\operatorname{comp} \colon \R^d \to \R^d$ that satisfies the contraction property
\begin{align}
\E \norm{\xx-\operatorname{comp}(\xx)}^2  \leq \left( 1- \frac{k}{d}\right) \norm{\xx}^2\,,
\qquad\forall \xx \in \R^d.
\label{eq:topk}
\end{align}
\end{definition}
The contraction property is sufficient to obtain all mathematical results that are derived in this paper. However, note that~\eqref{eq:topk} does not imply that $\operatorname{comp}(\xx)$ is a necessarily sparse vector. Also dense vectors can satisfy~\eqref{eq:topk}. One of the main goals of this work is to derive communication efficient schemes, thus we are particularly interested in operators that also ensure that $\operatorname{comp}(\xx)$ can be encoded much more efficiently than the original $\xx$.

The following two operators are examples of $k$-contraction operators with the additional property of being $k$-sparse vectors:
\begin{definition}
\label{def:topk}
For a parameter $1\leq k \leq d$, the operators $\topk_k \colon \R^d \to \R^d$ and $\randk_k \colon \R^d \times \Omega_k \to \R^d$, where $\Omega_k = \binom{[d]}{k}$ denotes the set of all $k$ element subsets of $[d]$, are defined for $\xx \in \R^d$ as
\begin{align}
 (\topk_k (\xx))_i &:= \begin{cases}(\xx)_{\pi(i)}, &\text{if } i \leq k\,, \\ 0 &\text{otherwise}\,, \end{cases} &
 (\randk_k(\xx,\omega))_i &:= \begin{cases}(\xx)_{i}, &\text{if } i \in \omega \,, \\ 0 &\text{otherwise}\,, \end{cases}
\end{align}
where $\pi$ is a permutation of $[d]$ such that $(\abs{\xx})_{\pi(i)} \geq (\abs{\xx})_{\pi(i+1)}$ for $i=1,\dots, d-1$. We abbreviate $\randk_k(\xx)$ whenever the second argument is chosen uniformly at random, $\omega \sim_{\rm u.a.r.} \Omega_k$.
\end{definition}
It is easy to see that both operators satisfy Definition~\ref{def:kcontract} of being a $k$-contraction. For completeness the proof is included in Appendix~\ref{sec:proofssimple}.

We note that our setting is more general than simply measuring sparsity in terms of the cardinality, i.e. the non-zero elements of vectors in $\R^d$.
Instead, Definition~\ref{def:kcontract} can also be considered for quantization or e.g. floating point representation of each entry of the vector. In this setting we would for instance measure sparsity in terms of the number of bits that are needed to encode the vector.
By this, we can also use stochastic rounding operators (similar as the ones used in~\cite{Alistarh2017:qsgd}, but with different scaling) as compression operators according to~\eqref{eq:topk}. Also gradient dropping~\cite{Aji2017:topk} trivially satisfies~\eqref{eq:topk}, though with different parameter $k$ in each iteration.

\begin{remark}[Ultra-sparsification]
We like to highlight that many other operators do satisfy Definition~\ref{def:kcontract}, not only the two examples given in Definition~\ref{def:topk}. As a notable variant is to pick a random coordinate of a vector with probability $\frac{k}{d}$, for $0 < k \leq 1$, property~\eqref{eq:topk} holds even if $k < 1$. I.e. it suffices to transmit on average \emph{less than one} coordinate per iteration (this would then correspond to a mini-batch update).%
\end{remark}

\subsection{Variance Blow-up for Unbiased Updates}
Before introducing SGD with memory we first discuss a motivating example. Consider the following variant of SGD, where $(d-k)$ random coordinates of the stochastic gradient are dropped:
\begin{align}
 \xx_{t+1} &:= \xx_t - \eta_t  \gg_t\,, & \gg_t &:= \tfrac{d}{k} \cdot \randk_k(\nabla f_i(\xx_t))\,,  \label{eq:sgd_drop}
\end{align}
where $i \sim_{\rm u.a.r} [n]$. It is important to note that the update is unbiased, i.e. $\E  \gg_t = \nabla f(\xx)$. For carefully chosen stepsizes $\eta_t$ this algorithm converges at rate $\mathcal{O}\bigl(\frac{\sigma^2}{t}\bigr)$ on strongly convex and smooth functions $f$, where $\sigma^2$ is an upper bound on the variance, see for instance~\cite{Zhao2015:importance}. We have
\begin{align*}
\textstyle
  \sigma^2=\E \norm{\frac{d}{k} \randk_k(\nabla f_i(\xx)) - \nabla f(\xx)}^2 \leq \E \norm{\frac{d}{k} \randk_k(\nabla f_i(\xx))}^2 \leq \frac{d}{k} \E_i \norm{\nabla f_i (\xx)}^2 \leq \frac{d}{k} G^2
\end{align*}
where we used the variance decomposition $\E \norm{X-\E X}^2 = \E\norm{X}^2 -\norm{\E X}^2$ and the standard assumption $\E_i \norm{\nabla f_i (\xx)}^2 \leq G^2$.
 Hence, when $k$ is small this algorithm requires $d$ times more iterations to achieve the same error guarantee as vanilla SGD with $k=d$.

It is well known that by using mini-batches the variance of the gradient estimator can be reduced. If we consider in~\eqref{eq:sgd_drop} the estimator $\gg_t := \frac{d}{k} \cdot \randk_k \bigl( \frac{1}{\tau} \sum_{i \in \mathcal{I}_\tau} \nabla f_i(\xx_t) \bigr)$ for $\tau = \lceil \frac{k}{d} \rceil$, and $\mathcal{I}_\tau \sim_{\rm u.a.r.} \binom{[n]}{k}$ instead, we have
\begin{align}
\textstyle
 \sigma^2 = \E \norm{\gg_t - \nabla f(\xx_t)}^2 \leq \E \norm{\frac{d}{k} \cdot \randk_k \bigl( \frac{1}{\tau} \sum_{i \in \mathcal{I}_\tau} \nabla f_i(\xx_t) \bigr)}^2 \leq \frac{d}{k \tau} \E_i \norm{\nabla f_i(\xx_t) }^2 \leq G^2\,. \label{eq:batch}
\end{align}
This shows that, when using mini-batches of appropriate size, the sparsification of the gradient does not hurt convergence. However, by increasing the mini-batch size, we increase the computation by a factor of $\frac{d}{k}$.

These two observations seem to indicate that the factor $\frac{d}{k}$ is inevitably lost, either by increased number of iterations or increased computation. However, this is no longer true when the information in~\eqref{eq:sgd_drop} is not dropped, but kept in memory. To illustrate this, assume $k=1$ and that index $i$ has not been selected by the $\randk_1$ operator in iterations $t=t_0,\cdots,t_{s-1}$, but is selected in iteration $t_s$. Then the memory $\mm_{t_s} \in \R^d$ contains this past information $(\mm_{t_s})_i = \sum_{t=t_0}^{t_{s-1}} (\nabla f_{i_{t}} (\xx_t))_i$. Intuitively, we would expect that the variance of this estimator is now reduced by a factor of $s$ compared to the na{\"i}ve estimator in~\eqref{eq:sgd_drop}, similar to the mini-batch update in~\eqref{eq:batch}. Indeed, SGD with memory converges at the same rate as vanilla SGD, as we will demonstrate below.

\subsection{SGD with Memory: Algorithm and Convergence Results}
We consider the following algorithm for parameter $0 <  k \leq d$, using a compression operator $\operatorname{comp}_k \colon \R^d \to \R^d$ which is a $k$-contraction (Definition~\ref{def:kcontract})
\begin{align}
 \xx_{t+1} &:= \xx_t - \gg_t\,, &
 \gg_t     &:= \operatorname{comp}_k (\mm_t + \eta_t \nabla f_{i_t}(\xx_t))\,, &
 \mm_{t+1} &:= \mm_t + \eta_t \nabla f_{i_t}(\xx_t) - \gg_t\,, \label{eq:memsgd}
\end{align}
where $i_t \sim_{\rm u.a.r.} [n]$, $\mm_0 := 0$ and $\{\eta_t\}_{t \geq 0}$ denotes a sequence of stepsizes. The pseudocode is given in Algorithm~\ref{alg:mem_sgd}. Note that the gradients get multiplied with the stepsize $\eta_t$ at the timestep $t$ when they put into memory, and not when they are (partially) retrieved from the memory.

We state the precise convergence result for Algorithm~\ref{alg:mem_sgd} in Theorem~\ref{thm:main} below. In Remark~\ref{rem:example} we give a simplified statement in big-$\mathcal O$ notation for a specific choice of the stepsizes $\eta_t$.

\algblock{ParFor}{EndParFor}

\algnewcommand\algorithmicparfor{\textbf{parallel\ for}}
\algnewcommand\algorithmicpardo{\textbf{do}}
\algnewcommand\algorithmicendparfor{\textbf{end\ parallel\ for}}
\algrenewtext{ParFor}[1]{\algorithmicparfor\ #1\ \algorithmicpardo}
\algrenewtext{EndParFor}{\algorithmicendparfor}

\begin{figure*}[]

 \begin{minipage}[t]{.47\linewidth}

\begin{algorithm}[H]
\caption{\textsc{Mem-SGD}}\label{alg:mem_sgd}
\begin{algorithmic}[1]
\State Initialize variables $\xx_0$ and $\mm_0 = \mathbf{0}$
\For{$t$ \textbf{in} $0\dots T-1$}
  \State Sample $i_t$ uniformly in $[n]$
  \State $\gg_{t} \gets \operatorname{comp}_k(\mm_{t} +  \eta_t \nabla f_{i_t}(\xx_t))$
  \State $\xx_{t+1} \gets \xx_{t} - \gg_{t}$
  \State $\mm_{t+1} \gets \mm_{t} +  \eta_t \nabla f_{i_t}(\xx_t) - \gg_t$
\EndFor
\end{algorithmic}
\end{algorithm}
\end{minipage}
\hfill
\begin{minipage}[t]{.47\linewidth}
\begin{algorithm}[H]
\caption{\textsc{Parallel-Mem-SGD}}\label{alg:mem_sgd_par}
\begin{algorithmic}[1]
\State Initialize shared variable $\xx$
\Statex and $\mm_{0}^{w} = \mathbf{0}$, $\forall w \in [W]$
\ParFor{$w$ \textbf{in} $1\dots W$}
  \For{$t$ \textbf{in} $0\dots T-1$}
    \State Sample $i_t^w$ uniformly in $[n]$
    \State $\gg_{t}^{w} \gets \operatorname{comp}_k(\mm_{t}^{w} +  \eta_t \nabla f_{i_t^w}(\xx))$
    \State $\xx \gets \xx - \gg_{t}^{w}$\Comment{shared memory}
    \State $\mm_{t+1}^{w} \gets \mm_{t}^{w} +  \eta_t \nabla f_{i_t^w}(\xx) - \gg_{t}^{w}$
  \EndFor
\EndParFor
\end{algorithmic}
\end{algorithm}
\end{minipage}
\hfill
\caption{\emph{Left:} The \textsc{Mem-SGD} algorithm. \emph{Right:} Implementation for multi-core experiments.}
\end{figure*}

\begin{theorem}
\label{thm:main}
Let $f_i$ be $L$-smooth, $f$ be $\mu$-strongly convex, $0 <  k \leq d$, $\E_i \norm{\nabla f_i(\xx_t)}^2 \leq G^2$ for $t=0,\dots, T-1$, where $\{\xx_t\}_{t \geq 0}$ are generated according to~\eqref{eq:memsgd} for stepsizes $\eta_t = \frac{8}{\mu(a + t)}$ and shift parameter $a > 1$. Then for $\alpha > 4$ such that $\frac{(\alpha + 1)\frac{d}{k} + \rho}{\rho + 1} \leq a$, with $\rho := \frac{4\alpha}{(\alpha - 4)(\alpha + 1)^2}$, it holds
\begin{align}
 \E f(\bar{\xx}_T) - f^\star \leq \frac{4T(T+2a)}{\mu S_T}G^2 + \frac{\mu a^3 }{8 S_T} \norm{\xx_0 - \xx^\star}^2 + \frac{64 T \bigl(1 + 2\frac{L}{\mu}\bigr)}{\mu S_T} \left(\frac{4\alpha}{\alpha-4}\right) \frac{d^2}{k^2} G^2\,, \label{eq:main}
\end{align}%
where
$\bar{\xx}_T = \frac{1}{S_T} \sum_{t=0}^{T-1} w_t \xx_t$, for $w_t = (a+t)^2$, and $S_T = \sum_{t=0}^{T-1} w_t \geq \frac{1}{3} T^3$.
\end{theorem}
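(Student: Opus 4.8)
The plan is to pass to a \emph{virtual} (perturbed) iterate sequence $\tilde\xx_t := \xx_t - \mm_t$. Subtracting the memory update from the iterate update in~\eqref{eq:memsgd}, the term $\gg_t$ cancels and one gets $\tilde\xx_{t+1} = \tilde\xx_t - \eta_t\nabla f_{i_t}(\xx_t)$ with $\tilde\xx_0 = \xx_0$ (since $\mm_0 = 0$). Thus $\tilde\xx_t$ performs exactly a vanilla SGD step, except that the stochastic gradient is evaluated at the ``real'' iterate $\xx_t$ rather than at $\tilde\xx_t$ — precisely the perturbed-iterate setup, where $\norm{\xx_t - \tilde\xx_t} = \norm{\mm_t}$ measures the perturbation. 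The whole proof then reduces to (i) controlling $\E\norm{\mm_t}^2$ and (ii) running a weighted SGD analysis on $\tilde\xx_t$ that tolerates this perturbation.

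First I would bound the memory, which I expect to be the crux of the argument. Applying the $k$-contraction property of $\operatorname{comp}_k$ to $\gg_t = \operatorname{comp}_k(\mm_t + \eta_t\nabla f_{i_t}(\xx_t))$ gives $\E\norm{\mm_{t+1}}^2 \le (1-\tfrac kd)\,\E\norm{\mm_t + \eta_t\nabla f_{i_t}(\xx_t)}^2$. Expanding the square with a weighted Young's inequality (whose parameter I would tie to $\alpha$) and using $\E_i\norm{\nabla f_i(\xx_t)}^2 \le G^2$ yields a recursion $\E\norm{\mm_{t+1}}^2 \le (1-\nu)\E\norm{\mm_t}^2 + \tfrac dk\,\mathcal{O}(1)\,\eta_t^2 G^2$ with $\nu = \Theta(k/d)$. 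Unrolling, the dominant contributions come from the last $\Theta(d/k)$ steps; over such a window the stepsizes $\eta_s = \tfrac{8}{\mu(a+s)}$ vary only by a constant factor \emph{provided} $a$ is large relative to $d/k$ — this is exactly the role of the hypothesis $\tfrac{(\alpha+1)d/k+\rho}{\rho+1}\le a$ (with $\rho$ quantifying the geometric tail). Pushing the constants through the geometric sum gives the clean bound $\E\norm{\mm_t}^2 \le \tfrac{4\alpha}{\alpha-4}\tfrac{d^2}{k^2}\eta_t^2 G^2$ (valid also trivially at $t=0$).

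Next comes the one-step descent for $\tilde\xx_t$. Expanding $\E\norm{\tilde\xx_{t+1} - \xx^\star}^2$, taking expectation over $i_t$ (so $\E_{i_t}\nabla f_{i_t}(\xx_t)=\nabla f(\xx_t)$, $\E_{i_t}\norm{\nabla f_{i_t}(\xx_t)}^2\le G^2$), and splitting $\tilde\xx_t - \xx^\star = (\xx_t-\xx^\star) - \mm_t$ in the cross term, I would use: $\mu$-strong convexity of $f$ to lower-bound $\lin{\nabla f(\xx_t),\xx_t-\xx^\star}$ by $f(\xx_t)-f^\star+\tfrac\mu2\norm{\xx_t-\xx^\star}^2$; the inequality $\norm{\nabla f(\xx_t)}^2 \le 2L(f(\xx_t)-f^\star)$ ($f$ is $L$-smooth, being an average of $L$-smooth $f_i$) together with Young to bound $2\eta_t\lin{\nabla f(\xx_t),\mm_t} \le \eta_t(f(\xx_t)-f^\star) + 2L\eta_t\norm{\mm_t}^2$ (so it only consumes half of the $-2\eta_t(f(\xx_t)-f^\star)$ term); and $\norm{\xx_t-\xx^\star}^2 \ge \tfrac12\norm{\tilde\xx_t-\xx^\star}^2 - \norm{\mm_t}^2$ to convert the strong-convexity gain into a contraction of $\norm{\tilde\xx_t-\xx^\star}^2$. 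This yields
\[
 \E\norm{\tilde\xx_{t+1}-\xx^\star}^2 \le \Bigl(1-\tfrac{\mu\eta_t}{2}\Bigr)\E\norm{\tilde\xx_t-\xx^\star}^2 - \eta_t\,\E\bigl(f(\xx_t)-f^\star\bigr) + \eta_t^2 G^2 + (2L+\mu)\eta_t\,\E\norm{\mm_t}^2,
\]
into which I substitute the memory bound, turning the last term into a $(2L+\mu)\tfrac{4\alpha}{\alpha-4}\tfrac{d^2}{k^2}\eta_t^3 G^2$ contribution.

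Finally I would run the weighted-averaging argument. With $r_t := \E\norm{\tilde\xx_t-\xx^\star}^2$ and $e_t := \E(f(\xx_t)-f^\star)$, rearrange to isolate $\eta_t e_t$, multiply by $\tfrac{w_t}{\eta_t} = \tfrac{\mu(a+t)^3}{8}$ for $w_t=(a+t)^2$, and sum over $t=0,\dots,T-1$. The elementary inequality $(a+t)^2(a+t-4)\le(a+t-1)^3$ (valid since $a>1$) ensures the coefficient of $r_t$ is at most $\tfrac{w_{t-1}}{\eta_{t-1}}$, so the $r_t$-terms telescope and leave only $\tfrac{w_0}{\eta_0}r_0 = \tfrac{\mu a^3}{8}\norm{\xx_0-\xx^\star}^2$ (using $\tilde\xx_0=\xx_0$) minus a nonnegative remainder. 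Using $\sum_{t=0}^{T-1}w_t\eta_t = \tfrac8\mu\sum_{t=0}^{T-1}(a+t)\le\tfrac{4T(T+2a)}{\mu}$ and $\sum_{t=0}^{T-1}w_t\eta_t^2 = \tfrac{64T}{\mu^2}$, dividing by $S_T$, applying Jensen's inequality $\E f(\bar\xx_T)-f^\star \le \tfrac1{S_T}\sum_t w_t e_t$, and using $S_T\ge\tfrac13 T^3$ reproduces~\eqref{eq:main} exactly. As noted, the main obstacle is the memory bound: the recursion for $\E\norm{\mm_t}^2$ mixes stochastic gradients at \emph{all} past iterates, of which only the second moment $G^2$ is available, and the decreasing stepsizes must be controlled so that the geometric tail still telescopes into a clean multiple of $\eta_t^2$ — this is what dictates the precise interplay between $a$, $d/k$, $\alpha$ and $\rho$. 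A secondary, routine difficulty is carefully applying the tower rule, since $\xx_t$, $\mm_t$, $\tilde\xx_t$ all depend on $i_0,\dots,i_{t-1}$.
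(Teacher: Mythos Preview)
Your outline matches the paper's proof: the virtual iterate $\tilde\xx_t=\xx_t-\mm_t$, the one-step recursion with perturbation term $(\mu+2L)\eta_t\E\norm{\mm_t}^2$ (Lemma~\ref{lemma:perturbed}), the memory bound $\E\norm{\mm_t}^2\le\tfrac{4\alpha}{\alpha-4}\tfrac{d^2}{k^2}\eta_t^2G^2$ (Lemma~\ref{lem:membound}), and the quadratic-weight telescoping (Lemma~\ref{lem:averaging}) are all exactly as in the paper, down to the constants.

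The one place your sketch misreads the argument is the role of the hypothesis on $a$ in the memory bound. You say unrolling works because ``over [a window of $\Theta(d/k)$ steps] the stepsizes vary only by a constant factor provided $a$ is large relative to $d/k$'', with $\rho$ ``quantifying the geometric tail''. But the theorem's hypothesis does \emph{not} force $a$ to be large relative to $d/k$: taking $\alpha\to4^+$ gives $\rho\to\infty$ and the constraint degenerates to $a>1$, so your window argument breaks for small $t$. The paper's Lemma~\ref{lem:h} instead splits into two regimes. For $t\ge t_0:=\max\{\lceil\alpha\tfrac dk-a\rceil,0\}$ it argues by induction, using the stepsize inequality $\eta_t^2(1-\tfrac{2k}{\alpha d})\le\eta_{t+1}^2$ (Lemma~\ref{lem:2}) to push the recursion through. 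For $t<t_0$ it discards the contraction entirely and uses the crude bound $\E\norm{\mm_t}^2\le t\sum_{s<t}\eta_s^2G^2\le\tfrac{t}{a-1}G^2$ (just $\norm{\sum\aa_i}^2\le s\sum\norm{\aa_i}^2$). The condition $\tfrac{(\alpha+1)d/k+\rho}{\rho+1}\le a$ is exactly what makes this crude bound no larger than $\rho G^2=\tfrac{4\alpha}{(\alpha-4)(\alpha+1)^2}G^2\le\tfrac{4\alpha}{\alpha-4}\tfrac{d^2}{k^2}\eta_{t_0}^2G^2$, so the two regimes glue; $\rho$ is defined to make that join work, not as a tail constant. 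Your unrolling argument does go through verbatim when $a\ge\alpha\tfrac dk$ (then $t_0=0$ and the small-$t$ case is vacuous), but to cover the full range of $a$ in the statement you need the two-regime argument.
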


\begin{remark}[Choice of the shift $a$]
\label{rem:alpha}
Theorem~\ref{thm:main} says that for any shift $a > 1$ there is a parameter $\alpha(a) > 4$ such that~\eqref{eq:main} holds. However, for the choice $a=\cO(1)$ one has to set $\alpha$ such that $\frac{\alpha}{\alpha-4} = \Omega(\frac{d}{k})$ and the last term in~\eqref{eq:main} will be of order $\cO(\frac{d^3}{k^3 T^2})$, thus requiring $T=\Omega(\frac{d^{1.5}}{k^{1.5}})$ steps to yield convergence. For $\alpha \geq 5$ we have $\frac{\alpha}{\alpha-4} = \cO(1)$ and the last term is only of order $\cO(\frac{d^2}{k^2 T^2})$ instead. However, this requires typically a large shift. Observe $\frac{(\alpha + 1)\frac{d}{k} + \rho}{\rho + 1} \leq 1 + (\alpha + 1)\frac{d}{k} \leq (\alpha + 2)\frac{d}{k}$, i.e. setting $a=(\alpha + 2)\frac{d}{k}$ is enough. We like to stress that in general it is not advisable to set $a \ggg (\alpha + 2)\frac{d}{k}$ as the first two  terms in~\eqref{eq:main} depend on $a$. In practice, it often suffices to set $a = \frac{d}{k}$, as we will discuss in Section~\ref{sec:experiments}.
\end{remark}
\begin{remark}
\label{remark:example}
\label{rem:example}
As discussed in Remark~\ref{rem:alpha} above, setting $\alpha = 5$ and $a=(\alpha + 2)\frac{d}{k}$ is feasible. With this choice, equation~\eqref{eq:main} simplifies to
\begin{align}
 \E f(\bar{\xx}_T) - f^\star \leq \mathcal{O} \left(\frac{G^2}{\mu T}\right) + \mathcal{O} \left(\frac{\frac{d^2}{k^2} G^2 \kappa }{\mu T^2} \right) + \mathcal{O} \left(\frac{ \frac{d^3}{k^3} G^2}{\mu T^3}\right)\,, \label{eq:asymptotic}
\end{align}
for $\kappa = \frac{L}{\mu}$. To estimate the second term in~\eqref{eq:main} we used the property $\E \mu \norm{\xx_0 - \xx^\star} \leq 2 G$ for $\mu$-strongly convex $f$, as derived in~\cite[Lemma 2]{Rakhlin2012suffix}. We observe that for $T = \Omega(\frac{d}{k} \kappa^{1/2} )$ %
the Algorithm~\ref{alg:mem_sgd} converges at rate $\cO(\frac{G^2}{\mu T})$, the same rate as vanilla SGD~\cite{Lacoste2012:simpler}. %
\end{remark}

\section{Proof Outline}
\label{sec:theory}
We now give an outline of the proof. The proofs of the lemmas are given in Appendix~\ref{sec:proofsdetail}.

\paragraph{Perturbed iterate analysis.} Inspired by the perturbed iterate framework in~\cite{Mania2017:perturbed} and~\cite{asaga} we first define a virtual sequence $\{\tilde{\xx}_t\}_{t \geq 0}$ in the following way:
\begin{align}
 \tilde{\xx}_0 &= \xx_0\,, &
 \tilde{\xx}_{t+1} = \tilde{\xx}_t - \eta_t \nabla f_{i_t}(\xx_t)\,, \label{eq:tilde}
\end{align}
where the sequences $\{\xx_t\}_{t \geq 0}, \{\eta_t\}_{t \geq 0}$ and $\{i_t\}_{t \geq 0}$ are the same as in~\eqref{eq:memsgd}. Notice that
\begin{align}
\textstyle
 \tilde{\xx}_t - \xx_{t} = \left(\xx_0 - \sum_{j=0}^{t-1} \eta_j \nabla f_{i_j}(\xx_j) \right) - \left(\xx_0 - \sum_{j=0}^{t-1} \gg_j \right) = \mm_t\,. \label{eq:m}
\end{align}
\begin{lemma}\label{lemma:perturbed}
Let $\{\xx_t\}_{t \geq 0}$ and $\{\tilde{\xx}_t\}_{t\geq 0}$ be defines as in~\eqref{eq:memsgd} and~\eqref{eq:tilde} and let $f_i$ be $L$-smooth and $f$ be $\mu$-strongly convex with $\E_i\norm{\nabla f_i (\xx_t)}^2 \leq G^2$. Then
\begin{align}
 \E \norm{\tilde{\xx}_{t+1} -\xx^\star}^2 \leq \left(1-\frac{\eta_t \mu}{2}\right) \E \norm{\tilde{\xx}_t - \xx^\star}^2 + \eta_t^2 G^2 - \eta_t e_t + \eta_t (\mu + 2L) \E \norm{\mm_t}^2\,, \label{eq:startrec}
\end{align}
where $e_t := \E f(\xx_t) - f^\star$.
\end{lemma}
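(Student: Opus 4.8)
The plan is to expand the squared norm of $\tilde{\xx}_{t+1} - \xx^\star$ using the update~\eqref{eq:tilde} and then handle the cross term by relating $\tilde{\xx}_t$ back to the true iterate $\xx_t$ via the identity $\tilde{\xx}_t - \xx_t = \mm_t$ from~\eqref{eq:m}. Concretely, I would start from
\begin{align*}
 \norm{\tilde{\xx}_{t+1} - \xx^\star}^2 = \norm{\tilde{\xx}_t - \xx^\star}^2 - 2\eta_t \lin{\nabla f_{i_t}(\xx_t), \tilde{\xx}_t - \xx^\star} + \eta_t^2 \norm{\nabla f_{i_t}(\xx_t)}^2\,,
\end{align*}
take expectation conditioned on the history (so $\E_{i_t}\nabla f_{i_t}(\xx_t) = \nabla f(\xx_t)$), and bound the last term by $\eta_t^2 G^2$ using the variance assumption. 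The middle term becomes $-2\eta_t \lin{\nabla f(\xx_t), \tilde{\xx}_t - \xx^\star}$, which I rewrite as $-2\eta_t \lin{\nabla f(\xx_t), \xx_t - \xx^\star} - 2\eta_t \lin{\nabla f(\xx_t), \mm_t}$.

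For the first piece I would invoke $\mu$-strong convexity of $f$ to get $\lin{\nabla f(\xx_t), \xx_t - \xx^\star} \geq f(\xx_t) - f^\star + \frac{\mu}{2}\norm{\xx_t - \xx^\star}^2$, and then convert $\norm{\xx_t - \xx^\star}^2$ into a contraction on $\norm{\tilde{\xx}_t - \xx^\star}^2$ using $\norm{\xx_t - \xx^\star}^2 \geq \frac12 \norm{\tilde{\xx}_t - \xx^\star}^2 - \norm{\mm_t}^2$ (the standard $\norm{a+b}^2 \le 2\norm{a}^2 + 2\norm{b}^2$ inequality applied to $\tilde{\xx}_t - \xx^\star = (\xx_t - \xx^\star) + \mm_t$). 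This produces the factor $(1 - \frac{\eta_t\mu}{2})$ in front of $\norm{\tilde{\xx}_t - \xx^\star}^2$, the term $-\eta_t e_t$ with $e_t = \E f(\xx_t) - f^\star$ (after one more use of $e_t \ge 0$ to absorb the leftover $-2\eta_t(f(\xx_t)-f^\star)$ down to $-\eta_t e_t$), and a spare $+\frac{\eta_t\mu}{2}\cdot 2\cdot\norm{\mm_t}^2 = \eta_t\mu\norm{\mm_t}^2$ contribution. For the cross term $-2\eta_t\lin{\nabla f(\xx_t), \mm_t}$ I would use Young's inequality in the form $-2\lin{\nabla f(\xx_t),\mm_t} \le \gamma^{-1}\norm{\mm_t}^2 + \gamma\norm{\nabla f(\xx_t)}^2$ — but to get the clean coefficient $2L$ on $\norm{\mm_t}^2$ one instead bounds $\lin{\nabla f(\xx_t),\mm_t}$ by relating $\nabla f(\xx_t)$ to $f(\xx_t) - f^\star$ through $L$-smoothness: $\norm{\nabla f(\xx_t)}^2 \le 2L(f(\xx_t) - f^\star)$ since $\xx^\star$ is a minimizer, so $-2\eta_t\lin{\nabla f(\xx_t),\mm_t} \le \eta_t\beta\norm{\mm_t}^2 + \eta_t\beta^{-1}\norm{\nabla f(\xx_t)}^2 \le \eta_t\beta\norm{\mm_t}^2 + 2\eta_t\beta^{-1}L(f(\xx_t)-f^\star)$; choosing $\beta = 2L$ turns the gradient term into $\eta_t(f(\xx_t)-f^\star)$, which is exactly swallowed by the $-2\eta_t(f(\xx_t)-f^\star)$ budget, leaving $-\eta_t e_t$ and a $2\eta_t L\norm{\mm_t}^2$ term. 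Collecting the $\norm{\mm_t}^2$ contributions gives the coefficient $(\mu + 2L)$, and taking full expectation yields~\eqref{eq:startrec}.

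The main obstacle is bookkeeping the nonnegative slack terms so that the final coefficients come out exactly as $(1-\frac{\eta_t\mu}{2})$, $-\eta_t e_t$, and $(\mu+2L)$: one has to spend the $-2\eta_t(f(\xx_t)-f^\star)$ from strong convexity carefully — part of it absorbs the smoothness-generated $\eta_t\norm{\nabla f(\xx_t)}^2/(2L)$ term and the remainder must still leave $-\eta_t e_t$. A secondary subtlety is that strong convexity of $f$ holds only for the average $f$, not each $f_i$, so the argument must be applied after conditioning on the history and taking $\E_{i_t}$, which is why the stochastic gradient $\nabla f_{i_t}(\xx_t)$ appears in the $\norm{\mm_t}^2$-free terms but is replaced by $\nabla f(\xx_t)$ everywhere it is dotted with a deterministic (history-measurable) vector; this is legitimate because $\mm_t$ and $\xx_t$ are measurable with respect to the history. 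No exotic tools are needed — just strong convexity, $L$-smoothness at the optimum, Young's inequality, and the relation~\eqref{eq:m}.
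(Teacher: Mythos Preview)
Your proposal is correct and follows essentially the same route as the paper's proof: expand $\norm{\tilde{\xx}_{t+1}-\xx^\star}^2$, take $\E_{i_t}$, split the cross term via $\tilde{\xx}_t - \xx_t = \mm_t$, apply strong convexity plus the inequality $\norm{\xx_t-\xx^\star}^2 \geq \tfrac12\norm{\tilde{\xx}_t-\xx^\star}^2 - \norm{\mm_t}^2$, and handle $-2\eta_t\lin{\nabla f(\xx_t),\mm_t}$ by Young's inequality with parameter $2L$ combined with $\norm{\nabla f(\xx_t)}^2 \leq 2L(f(\xx_t)-f^\star)$. One minor remark: your aside about ``one more use of $e_t \geq 0$ to absorb the leftover'' is unnecessary---as you yourself note later, the $+\eta_t(f(\xx_t)-f^\star)$ produced by the smoothness step exactly cancels one of the two copies of $-\eta_t(f(\xx_t)-f^\star)$ from strong convexity, so no slack is discarded.
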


\paragraph{Bounding the memory.}
From equation~\eqref{eq:startrec} it becomes clear that we should derive an upper bound on $\E \norm{\mm_t}^2$. For this we will use the contraction property~\eqref{eq:topk} of the compression operators.
\begin{lemma}
\label{lem:membound}
Let $\{\xx_t\}_{t \geq 0}$ as defined in~\eqref{eq:memsgd} for $0 < k \leq d$,  $\E_i\norm{\nabla f_i (\xx_t)}^2 \leq G^2$ and stepsizes $\eta_t = \frac{8}{\mu(a+t)}$ with $a$, $\alpha > 4$, as in Theorem~\ref{thm:main}. Then
\begin{align}
 \E \norm{\mm_t}^2 \leq \eta_t^2 \frac{4\alpha}{\alpha-4} \frac{d^2}{k^2} G^2\,.
 \label{eq:alpha}
\end{align}
\end{lemma}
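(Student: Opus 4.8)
The plan is to derive a recursion for $\E\norm{\mm_{t+1}}^2$ in terms of $\E\norm{\mm_t}^2$, using the contraction property~\eqref{eq:topk}, and then verify by induction that the claimed bound $\E\norm{\mm_t}^2 \leq \eta_t^2\,\frac{4\alpha}{\alpha-4}\,\frac{d^2}{k^2}\,G^2$ is preserved. First I would unroll the memory update: from~\eqref{eq:memsgd} we have $\mm_{t+1} = \mm_t + \eta_t\nabla f_{i_t}(\xx_t) - \operatorname{comp}_k(\mm_t + \eta_t\nabla f_{i_t}(\xx_t))$, so $\mm_{t+1}$ is exactly the residual of the compression operator applied to the vector $\vv_t := \mm_t + \eta_t\nabla f_{i_t}(\xx_t)$. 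The contraction property gives directly $\E\norm{\mm_{t+1}}^2 \leq (1 - \tfrac{k}{d})\,\E\norm{\vv_t}^2$, so the whole problem reduces to bounding $\E\norm{\mm_t + \eta_t\nabla f_{i_t}(\xx_t)}^2$.

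Next I would expand this via Young's inequality: for any $\beta > 0$, $\norm{\mm_t + \eta_t\nabla f_{i_t}(\xx_t)}^2 \leq (1+\beta)\norm{\mm_t}^2 + (1+\beta^{-1})\eta_t^2\norm{\nabla f_{i_t}(\xx_t)}^2$, and take expectations using $\E_i\norm{\nabla f_i(\xx_t)}^2 \leq G^2$. Combining with the contraction bound,
\begin{align*}
 \E\norm{\mm_{t+1}}^2 \leq \Bigl(1-\tfrac{k}{d}\Bigr)(1+\beta)\,\E\norm{\mm_t}^2 + \Bigl(1-\tfrac{k}{d}\Bigr)(1+\beta^{-1})\,\eta_t^2 G^2\,.
\end{align*}
The natural choice is $\beta \asymp \tfrac{k}{d}$ (something like $\beta = \tfrac{k}{2(d-k)}$, or more cleanly a choice making $(1-\tfrac{k}{d})(1+\beta)$ equal to roughly $1 - \tfrac{k}{2d}$), which makes the contraction factor strictly less than $1$ while keeping the $(1+\beta^{-1}) = O(d/k)$ factor in the noise term; note that squaring this $d/k$ against the $d/k$ already inside an extra Young step involving $\eta_t^2\nabla f$ is what ultimately produces the $d^2/k^2$ in the bound. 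This yields a recursion of the shape $\E\norm{\mm_{t+1}}^2 \leq (1-c\tfrac{k}{d})\E\norm{\mm_t}^2 + C\tfrac{d}{k}\eta_t^2 G^2$ for absolute constants $c, C$.

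Finally I would run the induction. Assume $\E\norm{\mm_t}^2 \leq A\eta_t^2\frac{d^2}{k^2}G^2$ with $A = \frac{4\alpha}{\alpha-4}$; substituting into the recursion and using that $\eta_t = \frac{8}{\mu(a+t)}$ varies slowly — specifically $\eta_{t}^2 \leq \eta_{t+1}^2\bigl(1 + \tfrac{1}{a+t}\bigr)^2$, and the hypothesis on $a$ relative to $\alpha$ and $\tfrac dk$ controls $\tfrac{1}{a+t}$ — one checks the induction step $\E\norm{\mm_{t+1}}^2 \leq A\eta_{t+1}^2\frac{d^2}{k^2}G^2$ closes, provided $A$, $a$, $\alpha$ satisfy the stated relation $\frac{(\alpha+1)\frac dk + \rho}{\rho+1} \le a$ with $\rho = \frac{4\alpha}{(\alpha-4)(\alpha+1)^2}$. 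The base case $\mm_0 = 0$ is trivial. The main obstacle I anticipate is the bookkeeping in the induction step: one must track the product of the contraction factor $(1-c\tfrac kd)$ against the ratio $\eta_t^2/\eta_{t+1}^2 \le (1+\tfrac1{a+t})^2$ and show their product stays below $1$ by a margin that absorbs the additive noise term — this is precisely where the somewhat mysterious constant $\rho$ and the lower bound on the shift $a$ enter, and getting the algebra to line up with the exact constant $\frac{4\alpha}{\alpha-4}$ (rather than merely $O(d^2/k^2)$) requires choosing $\beta$ and the slack parameters carefully rather than crudely.
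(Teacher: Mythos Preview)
Your derivation of the one-step recursion is exactly right and matches the paper: the contraction property plus Young's inequality with $\beta = \tfrac{k}{2d}$ gives
\[
\E\norm{\mm_{t+1}}^2 \;\leq\; \Bigl(1 - \tfrac{k}{2d}\Bigr)\E\norm{\mm_t}^2 + \tfrac{2d}{k}\,\eta_t^2 G^2\,.
\]

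The gap is in the induction. Substituting the hypothesis $\E\norm{\mm_t}^2 \leq A\,\eta_t^2\,\tfrac{d^2}{k^2}G^2$ with $A = \tfrac{4\alpha}{\alpha-4}$ into the recursion and simplifying, the step $t\mapsto t+1$ closes if and only if
\[
\eta_t^2\Bigl(1 - \tfrac{2k}{\alpha d}\Bigr) \leq \eta_{t+1}^2\,,
\]
which holds precisely when $a + t \geq \alpha\,\tfrac{d}{k}$. But the theorem's hypothesis $a \geq \frac{(\alpha+1)\tfrac{d}{k} + \rho}{\rho+1}$ does \emph{not} force $a \geq \alpha\,\tfrac{d}{k}$: e.g.\ for $\alpha = 5$ and large $\tfrac{d}{k}$ the threshold on $a$ is roughly $3.86\,\tfrac{d}{k}$, strictly below $5\,\tfrac{d}{k}$. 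So for the first $t_0 := \lceil \alpha\tfrac{d}{k} - a\rceil$ steps the inductive step simply fails, and you cannot bootstrap from $\mm_0 = 0$ alone.

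The paper closes this hole with a second, cruder estimate used only on the initial segment $t \leq t_0$: from $\E\norm{\mm_{t+1}}^2 \leq \E\norm{\mm_t + \eta_t\nabla f_{i_t}(\xx_t)}^2$ and $\norm{\sum_{i=1}^s \aa_i}^2 \leq s\sum_{i=1}^s\norm{\aa_i}^2$ one gets $\E\norm{\mm_t}^2 \leq t\sum_{j<t}\eta_j^2 G^2 \leq \tfrac{t}{a-1}\cdot\tfrac{64}{\mu^2}G^2$. Showing that this quantity is at most $\tfrac{4\alpha}{\alpha-4}\,\eta_t^2\,\tfrac{d^2}{k^2}G^2$ for all $t \leq t_0$ is \emph{exactly} where the constant $\rho = \tfrac{4\alpha}{(\alpha-4)(\alpha+1)^2}$ and the lower bound on $a$ enter --- contrary to your guess, $\rho$ plays no role whatsoever in the large-$t$ inductive step. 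Once the crude bound establishes the claim up to $t_0$, your induction takes over from there. So your plan is the correct large-$t$ half of the argument; what is missing is the separate treatment of the initial segment.
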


\paragraph{Optimal averaging.}
 Similar as discussed in~\cite{Lacoste2012:simpler,Shamir2013:averaging,Rakhlin2012suffix} we have to define a suitable averaging scheme for the iterates $\{\xx_t\}_{t \geq 0}$ to get the optimal convergence rate. In contrast to~\cite{Lacoste2012:simpler} that use linearly increasing weights, we use quadratically increasing weights, as for instance~\cite{Shamir2013:averaging,Stich2018:localSGD}.
\begin{lemma}\label{lem:averaging}
Let $\{a_t\}_{t\geq 0}$, $a_t \geq 0$, $\{e_t\}_{t \geq 0}$, $e_t \geq 0$, be sequences satisfying
\begin{align}
 a_{t+1} \leq \left(1 - \frac{\mu \eta_t}{2} \right)a_t + \eta_t^2 A + \eta_t^3 B - \eta_t e_t\,, \label{eq:recAB}
\end{align}
for $\eta_t = \frac{8}{\mu(a+t)}$ and constants $A, B \geq 0$, $\mu > 0$, $a > 1$. Then
\begin{align}
\frac{1}{S_T} \sum_{t=0}^{T-1} w_t e_t \leq \frac{\mu a^3}{8 S_T} a_0 + \frac{4T(T+2a)}{\mu S_T} A + \frac{64 T}{\mu^2 S_T}B\,, \label{eq:resAB}
\end{align}
for $w_t = (a+t)^2$ and $S_T := \sum_{t=0}^{T-1} w_t = \frac{T}{6}\left(2 T^2 + 6aT - 3T + 6a^2 - 6a  + 1 \right) \geq \frac{1}{3} T^3$.
\end{lemma}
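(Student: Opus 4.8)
\emph{Proof plan.} The plan is to run a weighted telescoping argument on the one-step recursion~\eqref{eq:recAB}, choosing the weights so that all the $a_t$-terms cancel and only the inhomogeneous terms survive. First I would rewrite~\eqref{eq:recAB} as
\[ \eta_t e_t \leq \Bigl(1 - \tfrac{\mu \eta_t}{2}\Bigr) a_t - a_{t+1} + \eta_t^2 A + \eta_t^3 B\,, \]
and multiply both sides by the weight $\gamma_t := (a+t)^3$. Since $\tfrac{\mu \eta_t}{2} = \tfrac{4}{a+t}$ for the stepsizes $\eta_t = \tfrac{8}{\mu(a+t)}$, this gives $\gamma_t\bigl(1 - \tfrac{\mu\eta_t}{2}\bigr) = (a+t)^2(a+t-4)$, and the key elementary estimate is
\[ (a+t)^2 (a+t-4) \leq (a+t-1)^3 = \gamma_{t-1}\,, \]
which, after expansion, is equivalent to $(a+t)^2 + 3(a+t) - 1 \geq 0$ and hence holds for all $t$ because $a > 1$.

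Next I would sum the inequality $\gamma_t \eta_t e_t \leq \gamma_t\bigl(1 - \tfrac{\mu\eta_t}{2}\bigr)a_t - \gamma_t a_{t+1} + \gamma_t \eta_t^2 A + \gamma_t \eta_t^3 B$ over $t = 0, \dots, T-1$. Re-indexing $\sum_{t=0}^{T-1}\gamma_t a_{t+1} = \sum_{t=1}^{T}\gamma_{t-1}a_t$ and using the estimate above together with $a_t \geq 0$, every interior coefficient of $a_t$ is non-positive and may be dropped; the boundary term $-\gamma_{T-1}a_T$ is $\leq 0$, and $\gamma_0\bigl(1-\tfrac{\mu\eta_0}{2}\bigr)a_0 \leq \gamma_0 a_0 = a^3 a_0$. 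This leaves $\sum_{t=0}^{T-1} \gamma_t \eta_t e_t \leq a^3 a_0 + A\sum_{t=0}^{T-1}\gamma_t\eta_t^2 + B\sum_{t=0}^{T-1}\gamma_t\eta_t^3$.

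It then remains to evaluate the constants. With $\gamma_t = (a+t)^3$ and $\eta_t = \tfrac{8}{\mu(a+t)}$ one has $\gamma_t\eta_t = \tfrac{8}{\mu}w_t$ for $w_t = (a+t)^2$, as well as $\gamma_t\eta_t^2 = \tfrac{64}{\mu^2}(a+t)$ and $\gamma_t \eta_t^3 = \tfrac{512}{\mu^3}$. Hence $\sum_{t=0}^{T-1}\gamma_t\eta_t^2 = \tfrac{64}{\mu^2}\sum_{t=0}^{T-1}(a+t) \leq \tfrac{32 T(T+2a)}{\mu^2}$ and $\sum_{t=0}^{T-1}\gamma_t\eta_t^3 = \tfrac{512 T}{\mu^3}$. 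Substituting and multiplying through by $\tfrac{\mu}{8 S_T}$ yields exactly~\eqref{eq:resAB}. Finally I would confirm the stated closed form $S_T = \sum_{t=0}^{T-1}(a+t)^2 = \tfrac{T}{6}(2T^2 + 6aT - 3T + 6a^2 - 6a + 1)$ from the usual formulas for $\sum_t t$ and $\sum_t t^2$, and note $S_T \geq \sum_{s=1}^{T} s^2 = \tfrac{T(T+1)(2T+1)}{6} \geq \tfrac13 T^3$, using $a > 1$.

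I do not expect a genuine obstacle — this is essentially a bookkeeping lemma — but the step that needs care is the choice of the weight exponent: one must take $\gamma_t \propto (a+t)^3$ so that the \emph{effective} weights $w_t = \tfrac{\mu}{8}\gamma_t\eta_t$ come out quadratic in $a+t$ (matching $w_t = (a+t)^2$), and then check that $\gamma_t(1-\tfrac{\mu\eta_t}{2}) \leq \gamma_{t-1}$ persists for every relevant $t$; this is the role of the assumption $a > 1$ (which additionally keeps the $\eta_t$ well-defined and gives $S_T \geq \tfrac13 T^3$). Re-indexing the telescoping sum correctly and tracking the numerical constants $\tfrac{\mu a^3}{8}$, $\tfrac{4}{\mu}$, and $\tfrac{64}{\mu^2}$ is then routine arithmetic.
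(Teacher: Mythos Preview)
Your proposal is correct and follows essentially the same route as the paper: the paper multiplies the recursion by $\frac{w_t}{\eta_t}=\frac{\mu}{8}(a+t)^3$, verifies the same telescoping inequality $(a+t-4)(a+t)^2\le(a+t-1)^3$, sums, and evaluates the three resulting sums exactly as you do. Your weight $\gamma_t=(a+t)^3$ differs from the paper's only by the constant factor $\frac{\mu}{8}$, so the arguments and constants match line by line.
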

\paragraph{Proof of Theorem~\ref{thm:main}.} The proof of the theorem immediately follows from the three lemmas that we have presented in this section and convexity of $f$, i.e. we have $\E f(\bar{\xx}_T) - f^\star \leq \frac{1}{S_T} \sum_{t=0}^{T-1} w_t e_t$ in~\eqref{eq:resAB}, for constants $A=G^2$ and $B=(\mu + 2L) \frac{4\alpha}{\alpha-4}\frac{d^2}{k^2}G^2$.\hfill$\Box$

\section{Experiments}
\label{sec:experiments}

We present numerical experiments to illustrate the excellent convergence properties and communication efficiency of \textsc{Mem-SGD}.
As the usefulness of SGD with sparsification techniques
has already been shown in practical applications~\cite{Dryden2016:topk,Aji2017:topk,Lin2018:deep,
Strom2015:1bit,Seide2015:1bit}
we focus here on a few particular aspects.
First, we verify the impact of the initial learning rate that did come up in the statement of Theorem~\ref{thm:main}. We then compare our method with QSGD~\cite{Alistarh2017:qsgd} which decreases the communication cost in SGD by using random quantization operators, but without memory. Finally, we show the performance of the parallel SGD depicted in Algorithm \ref{alg:mem_sgd_par} in a multi-core setting with shared memory and compare the speed-up to asynchronous SGD.

\subsection{Experimental Setup}

\paragraph{Models.} The experiments focus on the performance of \textsc{Mem-SGD} applied to logistic regression. The associated objective function is
$\frac{1}{n}\sum_{i=1}^n\log(1+\exp(-b_i\aa_i^\top\xx))+\frac{\lambda}{2}\|\xx\|^2$,
where $\aa_i \in\R^d$ and $b_i \in \{-1,+1\}$ are the data samples, and we employ a standard $L2$-regularizer. The regularization parameter is set to $\lambda=1/n$ for both datasets following \cite{Schmidt:2017:MFS:3057222.3057251}.

\paragraph{Datasets.} We consider a dense dataset, \dataset{epsilon} \cite{epsilon-dataset}, as well as a sparse dataset, \dataset{RCV1} \cite{rcv01-dataset} where we train on the larger test set. Statistics on the datasets are listed in Table~\ref{table:statistics} below:
\begin{figure*}[h]

\begin{minipage}[t]{.5\linewidth}
\begin{table}[H]
\centering

\vspace*{4.55mm}
\begin{tabular}{lccc}
  \hline
  &$n$&$d$&density\\
  \hline
  \textbf{epsilon}&400'000&2'000&100\%\\
  \textbf{RCV1-test}&677'399&47'236&0.15\%\\
  \hline
\end{tabular}
\vspace*{4.55mm}
\caption{Datasets statistics.}
\label{table:statistics}
\end{table}
\end{minipage}
\hfill
\begin{minipage}[t]{.5\linewidth}
\begin{table}[H]
\centering
\begin{tabular}{rcc}
  \hline
  &parameter&value\\
  \hline
  \multirow{ 2}{*}{\textbf{epsilon}}
  &$\gamma$&2\\
  &$a$&$d/k$\\
  \hline
  \multirow{ 2}{*}{\textbf{RCV1-test}}
  &$\gamma$&2\\
  &$a$&$10d/k$\\
  \hline
\end{tabular}
\vspace*{1mm}
\caption{Learning rate $\eta_t=\gamma/(\lambda(t+a))$.}
\label{table:parameter_search}
\end{table}
\end{minipage}
\hfill

\end{figure*}
\paragraph{Implementation.}We use Python3 and the numpy library \cite{scipy}. Our code is open-source and publicly available at \href{https://github.com/epfml/sparsifiedSGD}{\url{github.com/epfml/sparsifiedSGD}}.
We emphasize that our high level implementation is not optimized for speed per iteration but for readability and simplicity. We only report convergence per iteration and relative speedups, but not wall-clock time %
 because unequal efforts have been made to speed up the different implementations.
Plots additionally show the baseline computed with the standard optimizer \texttt{LogisticSGD} of scikit-learn \cite{scikit-learn}. Experiments were run on an Ubuntu 18.04 machine with a 24 cores processor Intel\textregistered~Xeon\textregistered~CPU E5-2680 v3 @ 2.50GHz.

\subsection{Verifying the Theory}
We study the convergence of the method using the stepsizes $\eta_t = \gamma/(\lambda(t+a))$ and hyperparameters~$\gamma$ and~$a$ set as in Table~\ref{table:parameter_search}. We compute the final estimate $\bar \xx$ as a weighted average of all iterates $\xx_t$ with weights $w_t =(t+a)^2$ as indicated by Theorem~\ref{thm:main}. The results are depicted in Figure~\ref{fig:exp-th}. We use $k \in \{1,2,3\}$ for \dataset{epsilon} and $k \in \{10,20,30\}$ for \dataset{RCV1} to increase the difference with large number of features. The $\operatorname{top}_k$ variant consistently outperforms $\randk_k$ and sometimes outperforms vanilla SGD, which is surprising and might come from feature characteristics of the datasets. We also evaluate the impact of the delay $a$ in the learning rate: setting it to~$1$ instead of order $\cO(d/k)$ dramatically hurts the memory and requires time to recover from the high initial learning rate (labeled ``without delay'' in Figure~\ref{fig:exp-th}).

We experimentally verified the convergence properties of \textsc{Mem-SGD} for different sparsification operators and stepsizes but we want to further evaluate its fundamental benefits in terms of sparsity enforcement and reduction of the communication bottleneck. The gain in communication cost of SGD with memory is very high for dense datasets---using the $\operatorname{top}_{1}$ strategy on \dataset{epsilon} dataset improves the amount of communication by $10^3$ compared to SGD. For the sparse dataset, SGD can readily use the given sparsity of the gradients.
Nevertheless, the improvement for $\operatorname{top}_{10}$ on \dataset{RCV1} is of approximately an order of magnitude.

\begin{figure}[b]
\includegraphics[width=\linewidth]{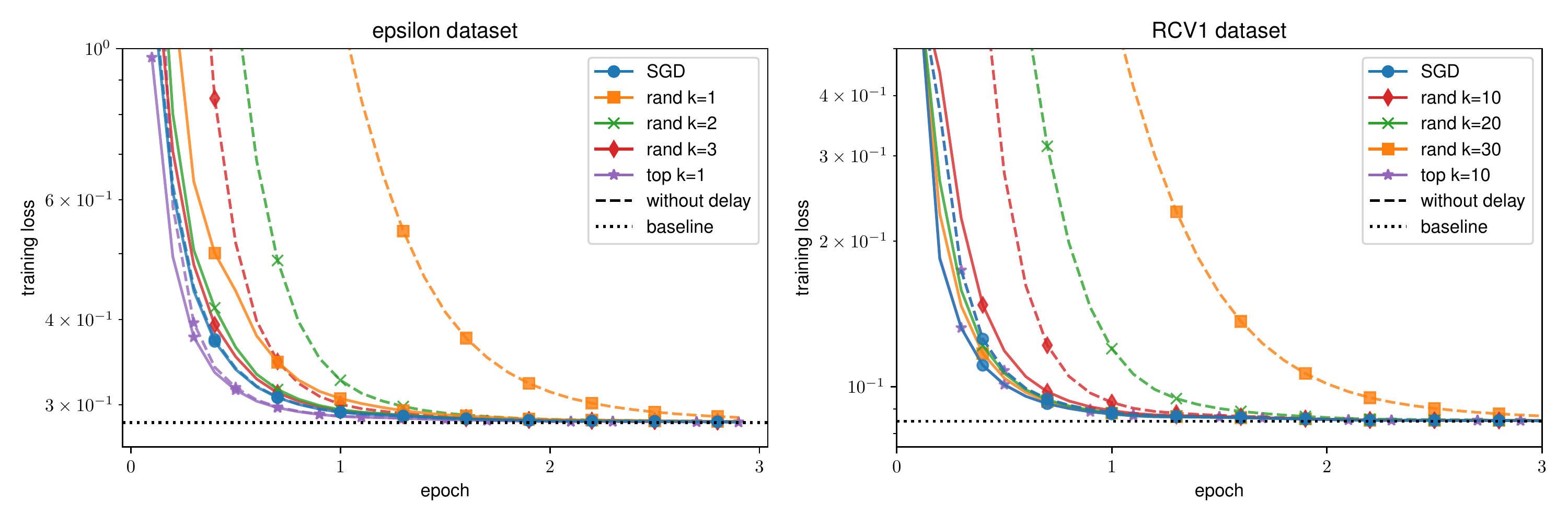}
\vspace*{-6mm}
\caption{Convergence of \textsc{Mem-SGD} using different sparsification operators compared to full SGD with theoretical learning rates (parameters in Table~\ref{table:parameter_search}).}
\label{fig:exp-th}

\end{figure}

\subsection{Comparison with QSGD}
\label{ssec:exp-qsgd}

Now we compare \textsc{Mem-SGD} with the QSGD compression scheme \cite{Alistarh2017:qsgd} which reduces  communication cost by random quantization. The accuracy (and the compression ratio) in QSGD is controlled by a parameter $s$, corresponding to the number of quantization levels. Ideally, we would like to set the quantization precision in QSGD such that the number of bits transmitted by QSGD and \textsc{Mem-SGD}  are identical and compare their convergence properties. However, even for the lowest precision, QSGD needs to send the sign and index of $\cO(\sqrt{d})$ coordinates. It is therefore not possible to reach the compression level of sparsification operators such as top-$k$ or random-$k$, that only transmit a constant number of bits per iteration (up to logarithmic factors).\footnote{Encoding the indices of the top-$k$ or random-$k$ elements can be done with additional $\cO(k \log d)$ bits. Note that $\log d \leq 32 \leq \sqrt{d}$ for both our examples.}
Hence, we did not enforce this condition and resorted to pick reasonable levels of quantization in QSGD ($s = 2^b$ with $b\in\{2,4,8\}$). Note that $b$-bits stands for the number of bits used to encode $s=2^b$ levels but the number of bits transmitted in QSGD can be reduced using Elias coding.
As a fair comparison in practice, we chose a standard learning rate $\gamma_0/(1+\gamma_0 \lambda t)^{-1}$ \cite{bottou2012stochastic}, tuned the hyperparameter $\gamma_0$ on a subset of each dataset (see Appendix~\ref{apdix-exp}).
Figure~\ref{fig:exp-qsgd} shows that \textsc{Mem-SGD} with $\operatorname{top}_{1}$ on \dataset{epsilon} and \dataset{RCV1} converges as fast as QSGD in term of iterations for 8 and 4-bits.
As shown in the bottom of Figure~\ref{fig:exp-qsgd}, we are transmitting two orders of magnitude fewer bits with the $\operatorname{top}_1$ sparsifier concluding that sparsification offers a much more aggressive and performant strategy than quantization.

\begin{figure}
\includegraphics[width=\linewidth]{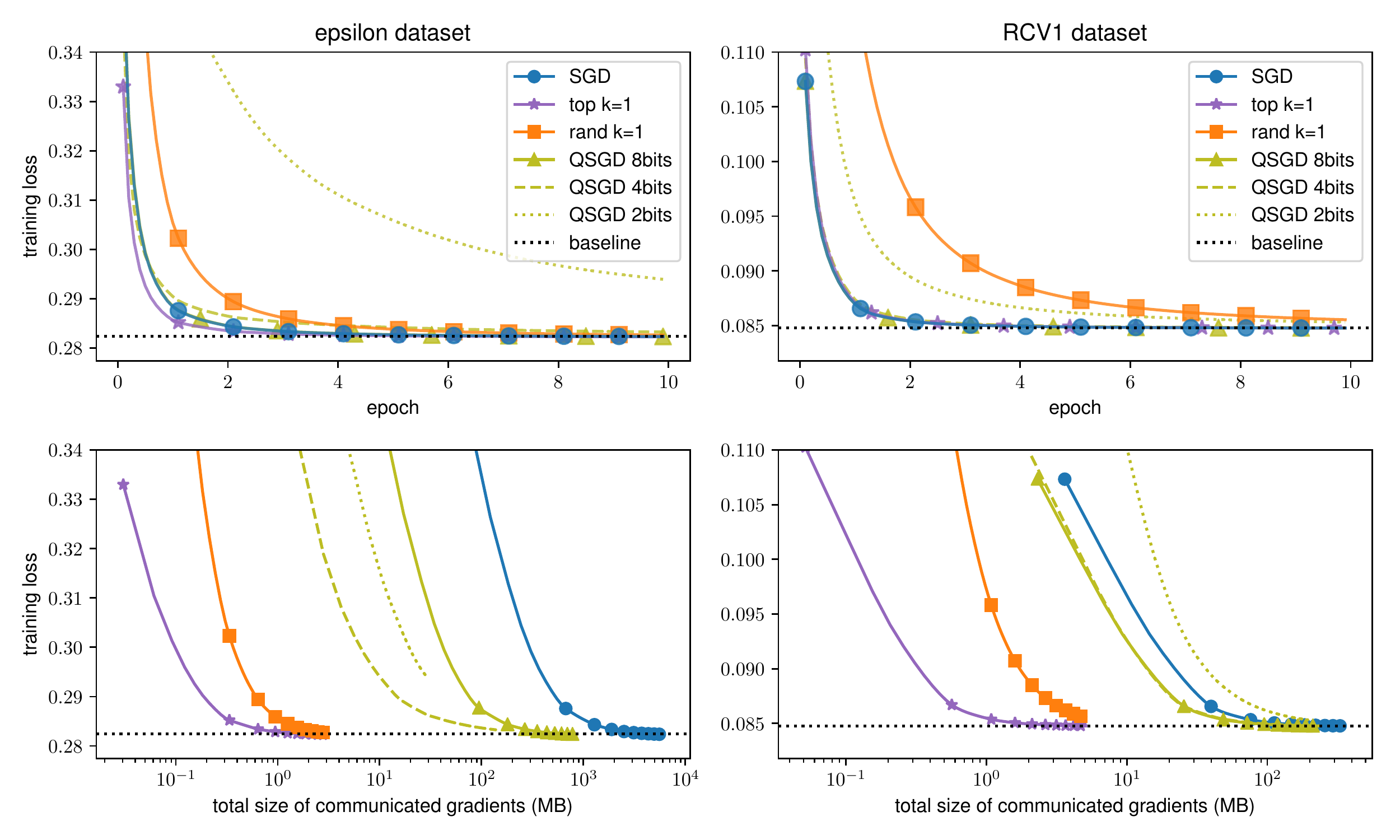}
\vspace*{-6mm}
\caption{\textsc{Mem-SGD} and QSGD convergence comparison. \emph{Top row:} convergence in number of iterations. \emph{Bottom row:} cumulated size of the communicated gradients during training. We compute the loss 10 times per epoch %
and remove the point at 0MB for clarity.}
\label{fig:exp-qsgd}

\end{figure}

\subsection{Multicore experiment}

We implement a parallelized version of \textsc{Mem-SGD}, as depicted in Algorithm \ref{alg:mem_sgd_par}. The enforced sparsity allows us to do the update in shared memory using a lock-free mechanism as in \cite{Niu2011:hogwild}. For this experiment we evaluate the final iterate $\xx_T$  instead of the weighted average $\bar{\xx}_T$ above, and use the learning rate $\eta_t \equiv (1 + t)^{-1}$.

\begin{figure}

\includegraphics[width=1\linewidth]{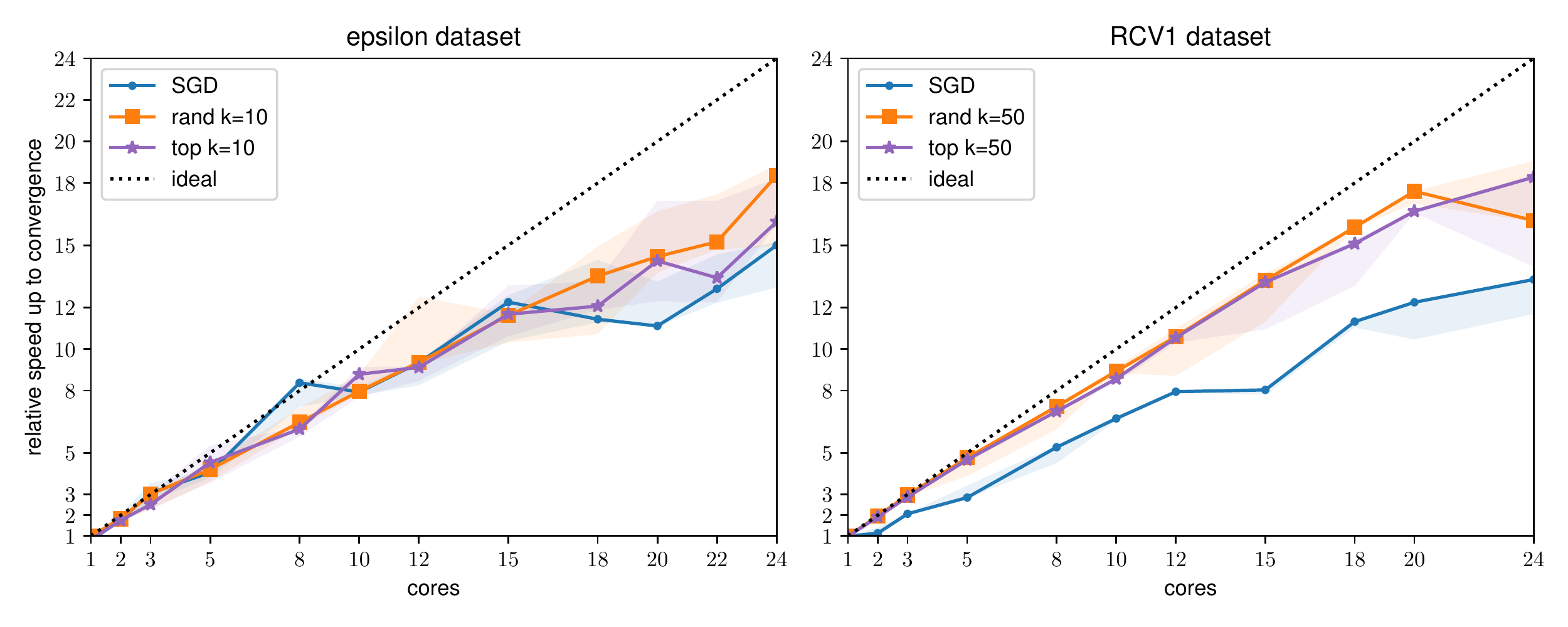}
\vspace*{-6mm}
\caption{Multicore wall-clock time speed up comparison between \textsc{Mem-SGD} and lock-free SGD. The colored area depicts the best and worst results of 3 independent runs for each dataset.}
\label{fig:exp-speed-up}

\end{figure}

Figure~\ref{fig:exp-speed-up} shows the speed-up obtained when increasing the number of cores.
We see that both sparsified SGD and vanilla SGD have a linear speed-up, the slopes are dependent of the implementation details. But we observe that \textsc{Parallel-Mem-SGD} with a reasonable sparsification parameter $k$ does not suffer of having multiple independent memories.
The experiment is run on a single machine with a 24 core processor, hence no inter-node communication is used. The main advantage of our method---overcoming the communication bottleneck--- would be even more visible in a multi-node setup.
In this asynchronous setup, SGD with memory computes gradients on stale iterates that differ only by a few coordinates. It encounters fewer inconsistent read/write operations than lock free asynchronous SGD and exhibits better scaling properties on the RCV1 dataset. The $\operatorname{top}_k$ operator performs better than $\operatorname{rand}_k$ in the sequential setup, but this is not the case in the parallel setup.

\section{Conclusion}

We provide the first concise convergence analysis of sparsified SGD~\cite{Strom2015:1bit,Seide2015:1bit,Dryden2016:topk,Aji2017:topk}. This extremely communication-efficient variant of SGD enforces sparsity of the applied updates by only updating a constant number of coordinates in every iteration. This way, the method overcomes the communication bottleneck of SGD, while still enjoying the same convergence rate in terms of stochastic gradient computations.

Our experiments verify the drastic reduction in communication cost by demonstrating that \textsc{Mem-SGD} requires one to two orders of magnitude less bits to be communicated than QSGD~\cite{Alistarh2017:qsgd} while converging to the same accuracy.
The experiments show an advantage for the top-$k$ sparsification over random sparsification in the serial setting, but not in the multi-core shared memory implementation. There, both schemes are on par, and show better scaling than a simple shared memory implementation that just writes the unquantized  updates in a lock-free asynchronous fashion (like Hogwild!~\cite{Niu2011:hogwild}).

The theoretical insights to \textsc{Mem-SGD} that were developed here should facilitate the analysis of the same scheme in the parallel (as developped in~\cite{Cordonnier2018thesis}) and the distributed setting.
It has already been shown in practice that gradient sparsification can be efficiently applied to bandwidth memory limited systems such as multi-GPU training
for neural networks~\cite{Dryden2016:topk,Aji2017:topk,Lin2018:deep,
Strom2015:1bit,Seide2015:1bit}. By delivering sparsity no matter if the original gradients were sparse or not, our scheme is not only communication efficient, but becomes more eligible for asynchronous implementations as well. While those were so far limited by strict sparsity assumptions (as e.g. in~\cite{Niu2011:hogwild}), our approach might make such methods much more widely applicable. %

\section*{Acknowledgments}
We would like to thank
Dan Alistarh
for insightful %
discussions in the early stages of this project and
Frederik K{\"u}nstner
for his useful comments on the various drafts of this manuscript.
We acknowledge funding from SNSF grant 200021\_175796,  Microsoft Research JRC project `Coltrain', as well as a Google Focused Research Award.

\small

\bibliographystyle{plain}
\bibliography{./memory_sgd_arxiv_v2}

\appendix
\vspace{1cm}

\clearpage
\onecolumn
\begin{center}
{\centering \LARGE Appendix }
\vspace{1cm}
\end{center}

\section{Proofs}

\subsection{Useful facts}
\label{sec:proofssimple}

\begin{lemma}
\label{lem:topk}
For $\xx \in \R^d$, $1 \leq k \leq d$, and operator $\operatorname{comp}_k \in \{\topk_k,\randk_k\}$ it holds
\begin{align}
\E \norm{\operatorname{comp}_k(\xx)-\xx}^2  \leq \left( 1- \frac{k}{d}\right) \norm{\xx}^2\,.
\end{align}
\end{lemma}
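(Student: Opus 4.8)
The plan is to treat the two operators separately, since they have rather different structure. I would start with $\randk_k$, which is the easy case: by definition $\norm{\randk_k(\xx,\omega)-\xx}^2 = \sum_{i \notin \omega} (\xx)_i^2$ for any fixed $\omega \in \Omega_k$, so taking the expectation over $\omega \sim_{\rm u.a.r.} \Omega_k$ and using linearity gives $\E \norm{\randk_k(\xx)-\xx}^2 = \sum_{i=1}^d (\xx)_i^2 \, \Pr[i \notin \omega]$. Since $\omega$ is a uniformly random $k$-element subset of $[d]$, each fixed index lies outside $\omega$ with probability exactly $1 - k/d$, and the claim follows with equality.

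For $\topk_k$ the statement is deterministic. Writing $S := \{\pi(1),\dots,\pi(k)\}$ for the index set of the $k$ largest-magnitude entries, we have $\norm{\topk_k(\xx)-\xx}^2 = \sum_{i \notin S} (\xx)_i^2$, so the goal reduces to the inequality $\sum_{i \notin S}(\xx)_i^2 \leq \frac{d-k}{d}\norm{\xx}^2$. I would prove this by comparing the total mass on the discarded block with the total mass: assuming $k < d$ (the case $k = d$ being trivial since then the left side is $0$), every entry indexed by $S$ has magnitude at least as large as every entry outside $S$, hence $\sum_{i \in S}(\xx)_i^2 \geq k \min_{i \in S}(\xx)_i^2 \geq k \max_{j \notin S}(\xx)_j^2 \geq \frac{k}{d-k}\sum_{j \notin S}(\xx)_j^2$. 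Adding $\sum_{j \notin S}(\xx)_j^2$ to both sides yields $\norm{\xx}^2 \geq \frac{d}{d-k}\sum_{j \notin S}(\xx)_j^2$, which is exactly the desired bound.

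I do not expect a real obstacle here. The only mild care needed is in the $\topk_k$ argument: the bound $\sum_{j \notin S}(\xx)_j^2 \leq (d-k)\max_{j\notin S}(\xx)_j^2$ and the introduction of the factor $\frac{1}{d-k}$ require $k < d$, so the edge case $k = d$ should be dispatched first. One should also note in passing that since Definition~\ref{def:topk} fixes $\pi$ only up to ties, the set $S$ may not be unique, but the value $\sum_{i \notin S}(\xx)_i^2$ is the same for every valid choice, so the argument is unaffected.
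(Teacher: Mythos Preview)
Your proof is correct. For $\randk_k$ you do exactly what the paper does: compute $\E\norm{\randk_k(\xx)-\xx}^2$ by linearity, using $\Pr[i\notin\omega]=1-k/d$.

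For $\topk_k$ your argument is fine but differs from the paper's. The paper observes that $\topk_k$ is, by construction, the $k$-sparse vector that \emph{minimizes} the residual $\norm{\xx - \cdot}^2$ over all choices of $k$ coordinates to keep; hence $\norm{\xx-\topk_k(\xx)}^2 \leq \norm{\xx-\randk_k(\xx,\omega)}^2$ holds pointwise for every $\omega$, and taking the expectation over $\omega$ immediately transfers the $\randk_k$ bound to $\topk_k$. This is a one-line reduction and avoids the direct inequality manipulation you carry out. Your route, by contrast, proves the deterministic bound $\sum_{i\notin S}(\xx)_i^2 \leq \frac{d-k}{d}\norm{\xx}^2$ from scratch via the chain $\sum_{i\in S}(\xx)_i^2 \geq k\min_{i\in S}(\xx)_i^2 \geq k\max_{j\notin S}(\xx)_j^2 \geq \frac{k}{d-k}\sum_{j\notin S}(\xx)_j^2$. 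Both approaches are elementary; the paper's is shorter because it leverages the optimality of $\topk_k$ rather than re-deriving the consequence of that optimality.
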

\begin{proof}
From the definition of the operators, for all $\xx$ in $\R^d$ we have
\begin{align}
  \norm{\xx - \operatorname{top}_k(\xx)}^2 \leq \norm{\xx - \operatorname{rand}_k(\xx)}^2
\end{align}
and we apply the expectation
\begin{align}
  \E_\omega \norm{\xx - \operatorname{rand}_k(\xx)}^2
  = \frac{1}{|\Omega_k|} \sum_{\omega \in \Omega_k} \sum_{i=1}^d\xx_i^2 \mathbb{I}\{i\not\in\omega\}
  =\sum_{i=1}^d x_i^2 \sum_{\omega \in \Omega_k} \frac{\mathbb{I}\{i\not\in\omega\}}{|\Omega_k|}
  = \left(1-\frac{k}{d}\right)\norm{\xx}^2
\end{align}
which concludes the proof.
\end{proof}

\begin{lemma}
\label{lem:2}
Let $\eta_t = \frac{1}{c+t}$, for $c \geq 1$. Then $\eta_t^2 \left(1- \frac{2}{c}\right) \leq \eta_{t+1}^2$.
\end{lemma}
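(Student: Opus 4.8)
\textbf{Proof proposal for Lemma~\ref{lem:2}.} The plan is to reduce the claimed inequality to a polynomial inequality by clearing denominators, and then bound everything crudely using $c \leq c+t$. Concretely, since $c \geq 1$ and $t \geq 0$, both $c+t$ and $c+t+1$ are positive, so $\eta_t^2(1-\tfrac2c) \leq \eta_{t+1}^2$ is equivalent to
\begin{align}
 \left(1 - \frac{2}{c}\right)(c+t+1)^2 \leq (c+t)^2\,.
\end{align}
Writing $s := c+t \geq c \geq 1$ and rearranging, this is in turn equivalent to $(s+1)^2 - s^2 \leq \frac{2}{c}(s+1)^2$, i.e.
\begin{align}
 2s + 1 \leq \frac{2}{c}(s+1)^2\,. \label{eq:prop-target}
\end{align}
Note this rearrangement is valid irrespective of the sign of $1-\tfrac2c$, so the case $c < 2$ (where the left side of the original inequality is negative) needs no separate treatment.

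To establish~\eqref{eq:prop-target} I would use $c \leq s$, which gives $\frac{2}{c}(s+1)^2 \geq \frac{2}{s}(s+1)^2$, and then expand: $\frac{2(s+1)^2}{s} = 2s + 4 + \frac{2}{s} \geq 2s + 4 > 2s + 1$. Chaining these bounds yields~\eqref{eq:prop-target}, and unwinding the equivalences completes the proof.

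There is essentially no hard part here; the only thing to be careful about is tracking that all quantities being cross-multiplied are positive (which holds because $c \geq 1$ and $t \geq 0$) so that the inequality direction is preserved, and noting that the single chain of estimates covers both the $c \geq 2$ and $c < 2$ regimes simultaneously.
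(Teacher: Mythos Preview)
Your proof is correct and follows essentially the same route as the paper: both clear denominators and verify the resulting polynomial inequality, with the paper expanding $(c+t+1)^2(c-2)-c(c+t)^2$ directly into a manifestly nonpositive expression while you use the substitution $s=c+t$ together with $c\leq s$ to reach the same conclusion slightly more cleanly. A minor bonus of your version is that it handles $c\leq 2$ within the same chain, whereas the paper's displayed identity $\frac{c-2}{(c+t+1)^2(c-2)}$ tacitly assumes $c>2$.
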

\begin{proof}

Observe
\begin{align}
 \eta_t^2 \left(1-\frac{2}{c}\right) &= \frac{c-2}{c (c+t)^2} \leq \frac{c-2}{(c+t+1)^2(c-2)} = \eta_{t+1}^2\,.
\end{align}
where the inequality follows from
\begin{align}
 (c+t+1)^2(c-2) &= c(c+t)^2 + \underbrace{(c-2) (1+ 2(t+c)) -2 (c+t)^2}_{=-2 t^2 -2ct -4t -3c-2 \leq 0}
\end{align}
\end{proof}

\subsection{Proof of the Main Theorem}
\label{sec:proofsdetail}

\begin{proof}[Proof of Lemma~\ref{lemma:perturbed}]
Using the update equation~\eqref{eq:tilde} we have
\begin{align}
\norm{\tilde{\xx}_{t+1}-\xx^\star}^2 = \norm{\tilde{\xx}_t-\xx^\star}^2 + \eta_t^2 \norm{\nabla f_{i_t}(\xx_t)}^2 - 2 \eta_t \lin{\xx_t - \xx^\star, \nabla f_{i_t}(\xx_t)} + 2\eta_t \lin{\xx_t - \tilde{\xx}_t, \nabla f_{i_t}(\xx_t)}\,.
\end{align}
And by applying expectation
\begin{align}
\E_{i_t} \norm{\tilde{\xx}_{t+1}-\xx^\star}^2 \leq \norm{\tilde{\xx}_t-\xx^\star}^2 + \eta_t^2 G^2 - 2 \eta_t \lin{\xx_t - \xx^\star, \nabla f(\xx_t)} + 2\eta_t \lin{\xx_t - \tilde{\xx}_t, \nabla f(\xx_t)}\,. \label{eq:rec1}
\end{align}

To upper bound the third term, we use the same estimates as in~\cite[Appendix~C.3]{asaga}: By strong convexity, $f(\yy) \geq f(\xx) + \lin{\nabla f(\xx),\yy-\xx} + \frac{\mu}{2} \norm{\yy-\xx}^2$ for $\xx,\yy \in \R^d$, hence
\begin{align}
 - \lin{\xx_t - \xx^\star, \nabla f(\xx_t)} \leq - \left(f(\xx_t) - f^\star \right) - \frac{\mu}{2} \norm{\xx_t - \xx^\star}^2
\end{align}
and with $\norm{\aa+\bb}^2 \leq 2\norm{\aa}^2 + 2\norm{\bb}^2$ we further have
\begin{align}
-\norm{\xx_t - \xx^\star}^2 \leq \norm{\xx_t - \tilde{\xx}_t}^2 - \frac{1}{2} \norm{\tilde{\xx}_t - \xx^\star}^2\,.
\end{align}
Putting these two estimates together, we can bound~\eqref{eq:rec1} as follows:
\begin{align}
\E_{i_t} \norm{\tilde{\xx}_{t+1}-\xx^\star}^2 \leq \left(1-\frac{\eta_t \mu}{2}\right) \norm{\tilde{\xx}_t-\xx^\star}^2 + \eta_t^2 G^2 - 2 \eta_t e_t + \eta_t \mu \norm{\xx_t - \tilde{\xx}_t}^2 + 2\eta_t \lin{\xx_t - \tilde{\xx}_t, \nabla f(\xx_t)}\,, \label{eq:rec2}
\end{align}
where $e_t = \E f(\xx_t) - f^\star$. We now estimate the last term. As each $f_i$ is $L$-smooth also $f$ is $L$-smooth, i.e. satisfies $f(\xx) -  f(\yy) - \lin{\nabla f(\yy),\xx-\yy} \geq  \frac{1}{2L} \norm{\nabla f(\yy) - \nabla f(\xx)}^2$. Together with $ 2 \lin{a,b} \leq \gamma \norm{a}^2 + \gamma^{-1} \norm{b}^2$ we have
\begin{align}
\lin{\xx_t - \tilde{\xx}_t, \nabla f(\xx_t)} &\leq \frac{1}{2} \left(2L \norm{\xx_t - \tilde{\xx}_t}^2 + \frac{1}{2L} \norm{\nabla f(\xx_t)}^2 \right) \\
&= L \norm{\xx_t - \tilde{\xx}_t}^2 + \frac{1}{4L} \norm{\nabla f(\xx_t) - \nabla f(\xx^\star)}^2 \\
&\leq L \norm{\xx_t - \tilde{\xx}_t}^2 + \frac{1}{2} \left(f(\xx_t) - f^\star\right)\,.
\end{align}
Combining with~\eqref{eq:rec2} we have
\begin{align}
\E_{i_t} \norm{\tilde{\xx}_{t+1}-\xx^\star}^2 \leq \left(1-\frac{\eta_t \mu}{2}\right) \norm{\tilde{\xx}_t-\xx^\star}^2 + \eta_t^2 G^2 - \eta_t e_t + \eta_t (\mu+2L) \norm{\xx_t - \tilde{\xx}_t}^2\,,
\end{align}
and the claim follows with~\eqref{eq:m}.
\end{proof}

\begin{proof}[Proof of Lemma~\ref{lem:membound}]
First, observe that by Lemma~\ref{lem:topk} and $\norm{\aa + \bb}^2 \leq (1+\gamma) \norm{\aa}^2 + (1+\gamma^{-1}) \norm{\bb}^2$ for $\gamma > 0$ we have
\begin{align}
 \E \norm{\mm_{t+1}}^2 &\leq \left(1-\frac{k}{d}\right) \norm{\mm_t + \eta_t \nabla f_{i_t}(\xx_t)}^2 \\
  &\leq \left(1-\frac{k}{d}\right) \left( \left(1+\frac{k}{2d}\right) \E \norm{\mm_t}^2 + \left(1 + \frac{2d}{k} \right) \eta_t^2 \E \norm{\nabla f_{i_t}(\xx_t)}^2 \right) \\
  &\leq \left(1 - \frac{k}{2d} \right) \E \norm{\mm_t}^2 + \frac{2d}{k} \eta_t^2 G^2 \,.
\end{align}
On the other hand, from $\norm{\sum_{i=1}^s \aa_i}^2 \leq s \sum_{i=1}^s \norm{\aa_i}^2$ we also have
\begin{align}
 \E \norm{\mm_{t+1}}^2 \leq (t+1) \sum_{i=0}^t \eta_t^2 G^2\,.
\end{align}
Now the claim follows from Lemma~\ref{lem:h} just below with $A = \frac{8G^2}{\mu}$.
\end{proof}
\begin{lemma}
\label{lem:h}
Let $A \geq 0$, $d\geq k \geq 1$, $\{h_t\}_{t \geq 0}$, $h_t \geq 0$ be a sequence satisfying
\begin{align}
 h_0 & = 0 \,, & & h_{t+1} \leq  \min \left\{ \left(1-\frac{k}{2d}\right) h_t + \frac{2d}{k} \eta_t^2 A,\ (t+1) \sum_{i=0}^t \eta_i^2 A \right\} \,,
\end{align}
for a sequence $\eta_t = \frac{1}{a+t}$ with $a \geq  \frac{(\alpha +1) \frac{d}{k} + \rho +1}{\rho + 1}> 1$,  for $\alpha > 4$, $\rho := \frac{4\alpha}{(\alpha-4)(\alpha + 1)^2}$.
Then
\begin{align}
 h_t \leq \frac{4\alpha}{\alpha-4} \eta_t^2 \frac{d^2}{k^2}  A\,,
\end{align}
for $t \geq 0$.
\end{lemma}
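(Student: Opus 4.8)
The plan is to prove the bound $h_t \leq \frac{4\alpha}{\alpha-4}\eta_t^2\frac{d^2}{k^2}A$ by induction on $t$, but with a twist: a naive induction using only the first branch of the $\min$ (the geometric-decay recursion $h_{t+1}\leq(1-\frac{k}{2d})h_t+\frac{2d}{k}\eta_t^2 A$) will \emph{not} close, because the multiplicative factor $(1-\frac{k}{2d})$ combined with the slow growth of $\eta_t^{-2}$ is not quite contractive enough for small $t$ — this is exactly why the shift parameter $a$ and the auxiliary quantity $\rho$ are introduced. So the strategy is a two-regime argument: for the initial iterations $t \leq t_0$ (roughly $t_0 \approx \alpha\frac{d}{k}$) I would use the \emph{second} branch $h_{t+1}\leq (t+1)\sum_{i=0}^t \eta_i^2 A$, bounding $\sum_{i=0}^t \eta_i^2 \leq \sum_{i=0}^t (a+i)^{-2}$ by something like $\frac{1}{a-1}-\frac{1}{a+t} = \frac{t+1}{(a-1)(a+t)}$, hence $h_{t+1} \leq \frac{(t+1)^2}{(a-1)(a+t)}A$; comparing this with the target $\frac{4\alpha}{\alpha-4}\frac{d^2}{k^2}\frac{A}{(a+t+1)^2}$ and using $a \gtrsim \alpha\frac{d}{k}$ should verify the claim on this initial segment. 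For $t > t_0$, I would switch to the geometric branch and run the induction, where now $a+t$ is large enough relative to $\frac{d}{k}$ that the step works.

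Concretely, for the inductive step in the geometric regime: assume $h_t \leq C\eta_t^2\frac{d^2}{k^2}A$ with $C := \frac{4\alpha}{\alpha-4}$. Then
\begin{align}
h_{t+1} &\leq \left(1-\frac{k}{2d}\right)C\eta_t^2\frac{d^2}{k^2}A + \frac{2d}{k}\eta_t^2 A = \eta_t^2\frac{d^2}{k^2}A\left(C\left(1-\frac{k}{2d}\right) + \frac{2k}{d}\right)\,.
\end{align}
To conclude $h_{t+1}\leq C\eta_{t+1}^2\frac{d^2}{k^2}A$ it suffices that $C(1-\frac{k}{2d})+\frac{2k}{d} \leq C\frac{\eta_{t+1}^2}{\eta_t^2}$. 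Here I would invoke Lemma~\ref{lem:2} (with $c = a+t$), which gives $\eta_{t+1}^2 \geq \eta_t^2(1-\frac{2}{a+t})$, so it is enough to have $C(1-\frac{k}{2d})+\frac{2k}{d} \leq C(1-\frac{2}{a+t})$, i.e. $\frac{2k}{d} \leq C\left(\frac{k}{2d}-\frac{2}{a+t}\right)$. Plugging in $C=\frac{4\alpha}{\alpha-4}$ and rearranging, this becomes a lower bound on $a+t$ of the form $a+t \geq (\text{const}\cdot\alpha)\frac{d}{k}$, which holds for $t\geq t_0$ by the choice of $t_0$; the precise constant is where the hypothesis $a \geq \frac{(\alpha+1)\frac{d}{k}+\rho+1}{\rho+1}$ and the definition of $\rho$ get used, so I would reverse-engineer $t_0$ and $\rho$ from this inequality rather than guessing them.

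The main obstacle is the \emph{seam} between the two regimes: one must check that the bound obtained from the second branch at $t=t_0$ is compatible with (no larger than) what the geometric induction needs as its base case, and simultaneously that $t_0$ is small enough that the polynomial bound $\frac{(t+1)^2}{(a-1)(a+t)}$ still beats $C\frac{d^2}{k^2}(a+t+1)^{-2}$ throughout $[0,t_0]$. Getting both constraints to hold forces the particular form of the assumption on $a$ — it has to be large enough ($\gtrsim \frac{d}{k}$) for the seam to close but the statement keeps it as small as possible via the $\rho$ correction. A secondary nuisance is handling the $\min$ cleanly: at each $t$ I only get to use whichever branch I want, so the argument is really "for $t\le t_0$ invoke branch 2, for $t> t_0$ invoke branch 1 and use the inductive hypothesis from the previous step (which itself may have come from either branch)", and I should make sure the induction hypothesis is stated uniformly as $h_t\le C\eta_t^2\frac{d^2}{k^2}A$ so that the regime switch is transparent. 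The smoothness/convexity of $f$ plays no role here; this is a pure recursion estimate, so once the algebra is organized around Lemma~\ref{lem:2} it should go through.
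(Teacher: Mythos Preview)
Your proposal is correct and follows essentially the same two-regime strategy as the paper: define $t_0 = \max\{\lceil \alpha\tfrac{d}{k} - a\rceil, 0\}$, use the crude sum branch $h_t \leq t\sum_{i=0}^{t-1}\eta_i^2 A \leq \tfrac{t}{a-1}A$ for $t \leq t_0$ (the paper drops the extra $(a+t)^{-1}$ factor you keep, which is not needed), and then close the induction via the geometric branch for $t \geq t_0$ using Lemma~\ref{lem:2}. The algebra you sketch for the inductive step indeed reduces exactly to $a+t \geq \alpha\tfrac{d}{k}$, and the seam condition $\tfrac{t_0}{a-1} \leq \rho = \tfrac{4\alpha}{(\alpha-4)(\alpha+1)^2}$ together with $\eta_t^2\tfrac{d^2}{k^2} \geq \tfrac{1}{(\alpha+1)^2}$ for $t\leq t_0$ is precisely what forces the stated lower bound on $a$.
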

\begin{proof}
The claim holds for $t=0$.
\paragraph{Large $t$.} Let $t_0 = \max\{ \lceil \alpha \frac{d}{k}-a \rceil, 0\}$, i.e. $\eta_{t_0} \leq \frac{k}{\alpha d}$. (Note that for any $a \geq \alpha \frac{k}{d}$  it holds $t_0 = 0$.) Suppose the claim holds for $t \leq t_0$. Observe,
\begin{align}
\eta_{t}^2 \left(1- \frac{2 k}{\alpha d}\right) \leq \eta_{t+1}^2\,,
\end{align}
for $t \geq t_0$. This follows from Lemma~\ref{lem:2} with $c=\frac{\alpha d}{k}$. By induction,
\begin{align}
 h_{t+1} &\leq \left(1-\frac{k}{2d}\right) \frac{4\alpha}{\alpha-4}  \eta_t^2 \frac{d^2}{k^2}  A + \frac{2d}{k} \eta_t^2 A  \\
& = \underbrace{\eta_t^2\left(1 -  \frac{2 k}{\alpha d} \right)}_{\leq \eta_{t+1}^2} \frac{4\alpha}{\alpha-4}   \frac{d^2}{k^2} A\,,
\end{align}
where we used $t \geq t_0$ (and the observation just above) for the last inequality.
\paragraph{Small $t$.} Assume $t_0 \geq 1$, otherwise the claim follows from the part above.
We have
\begin{align}
 h_t & \leq t \sum_{i=0}^{t-1} \eta_i^2 A \leq \frac{t}{a-1} A\,,
\end{align}
where we used
\begin{align}
   \sum_{t=0}^{t-1} \eta_t^2 &\leq \sum_{t=0}^\infty \frac{1}{(a+t)^2} \leq \int_{a-1}^\infty  \frac{1}{x^2} {\rm d} x = \frac{1}{a-1}\,,
\end{align}
for $a > 1$. For $t \leq t_0$ we have
\begin{align}
\eta_t^2  \frac{d^2}{k^2}  &\geq \eta_{t_0}^2  \frac{d^2}{k^2}   = \frac{1}{(a+t_0)^2} \frac{d^2}{k^2} \geq  \frac{1}{\left(\frac{\alpha d}{k} +1\right)^2} \frac{d^2}{k^2} \geq \frac{1}{\bigl(\frac{(\alpha+1) d}{k}\bigr)^2} \frac{d^2}{k^2} = \frac{1}{(\alpha+1)^2}\,,
\end{align}
using $\frac{d}{k} \geq 1$.
Observe $t_0 \leq \alpha \frac{d}{k} - a + 1 \leq (\alpha + 1) \frac{d}{k}-a$. For $t \leq (\alpha + 1) \frac{d}{k} -a $ we have
\begin{align}
h_t \leq \frac{t}{a-1} A \leq \frac{(\alpha + 1) \frac{d}{k}-a}{a-1} A \leq \rho A\,,
\end{align}
by the condition on $a$. Hence, by combining these observations,
\begin{align}
 h_t \leq \frac{t}{a-1} A \leq \rho A = \frac{4\alpha}{\alpha-4} \frac{1}{(\alpha + 1)^2}A \leq  \frac{4\alpha}{\alpha-4} \eta_{t_0}^2 \frac{d^2}{k^2} A \leq \frac{4\alpha}{\alpha-4} \eta_{t}^2  \frac{d^2}{k^2}A\,,
\end{align}
and the proof follows.
\end{proof}
\begin{proof}[Proof of Lemma~\ref{lem:averaging}]
Observe
\begin{align}
 \left(1-\frac{\mu \eta_t}{2}\right) \frac{w_t}{\eta_t} &= \left(\frac{a+t - 4}{a+t} \right) \frac{\mu (a+t)^3}{8} = \frac{\mu (a+t-4)(a+t)^2}{8}
 \leq \frac{\mu (a+t-1)^3}{8} = \frac{w_{t-1}}{\eta_{t-1}}\,,
\end{align}
where the inequality is due to
\begin{align}
 (a+t-4)(a+t)^2 = (a+t-1)^3 + \underbrace{ 1 - 3a -a^2 -3t - 2at - t^2}_{\leq 0} \leq (a+t-1)^3\,,
\end{align}
for $a \geq 1, t \geq 0$.

We now multiply equation~\eqref{eq:recAB} with $\frac{w_t}{\eta_t}$, which yields
\begin{align}
a_{t+1} \frac{w_t}{\eta_t} \leq \underbrace{\left(1-\frac{\mu \eta_t}{2}\right) \frac{w_t}{\eta_t}}_{\leq \frac{w_{t-1}}{\eta_{t-1}}} a_t + w_t \eta_t A + w_t \eta_t^2 B - w_t e_t\,.
\end{align}
and by recursively substituting $a_{t} \frac{w_{t-1}}{\eta_{t-1}}$ we get
\begin{align}
 a_{T} \frac{w_{T-1}}{\eta_{T-1}} \leq \left(1-\frac{\mu \eta_0}{2}\right) \frac{w_0}{\eta_0} a_0 + \sum_{t=0}^{T-1} w_t \eta_t A + \sum_{t=0}^{T-1} w_t \eta_t^2 B - \sum_{t=0}^{T-1} w_t e_t\,,
\end{align}
i.e.
\begin{align}
\sum_{t=0}^{T-1} w_t e_t \leq  \frac{w_0}{\eta_0} a_0 + \sum_{t=0}^{T-1} w_t \eta_t A + \sum_{t=0}^{T-1} w_t \eta_t^2 B\,.
\end{align}
We will now derive upper bounds for the terms on the right hand side.
We have
\begin{gather}
 \frac{w_0}{\eta_0} = \frac{\mu a^3}{8}\,, \\
 \sum_{t=0}^{T-1} w_t \eta_t = \sum_{t=0}^{T-1} \frac{8(a+t)}{\mu} = \frac{4 T^2 + 8 aT - 4T}{\mu} \leq \frac{4T(T+2a)}{\mu}\,,
\end{gather}
and
\begin{align}
\sum_{t=0}^{T-1} w_t \eta_t^2 = \sum_{t=0}^{T-1} \frac{64}{\mu^2} = \frac{64 T}{\mu^2}\,.
\end{align}

Let $S_T := \sum_{t=0}^{T-1} w_t = \frac{T}{6}\left(2 T^2 + 6aT - 3T + 6a^2 - 6a  + 1 \right)$. Observe
\begin{align}
 S_T \geq \frac{1}{3}T^3 + \underbrace{aT^2 - \frac{1}{2}T^2 +a^2 T - aT}_{= T^2\left(a-\frac{1}{2}\right) + T(a^2-a)\geq 0} \geq \frac{1}{3}T^3\,.
\end{align}
for $a \geq 1, T \geq 0$.
\end{proof}

\pagebreak
\section{Experiments}
\label{apdix-exp}

\paragraph{Parameter tuning.} To produce a fair comparison between \textsc{Mem-SGD} and QSGD \cite{Alistarh2017:qsgd}, we fix the learning rate to $\gamma_0/(1 + \gamma_0 \lambda t)^{-1}$ and run a grid search on the $\gamma_0$ hyperparameter (individually for each method). %
The results are displayed in Figure~\ref{fig:quantize-param-search}.

\begin{figure}[h]
\includegraphics[width=\linewidth]{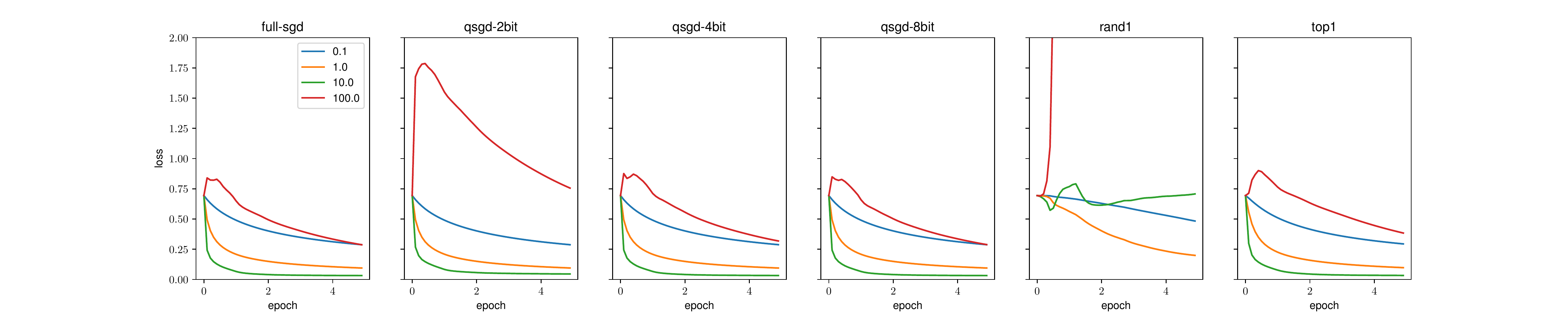}
\\\\
\includegraphics[width=\linewidth]{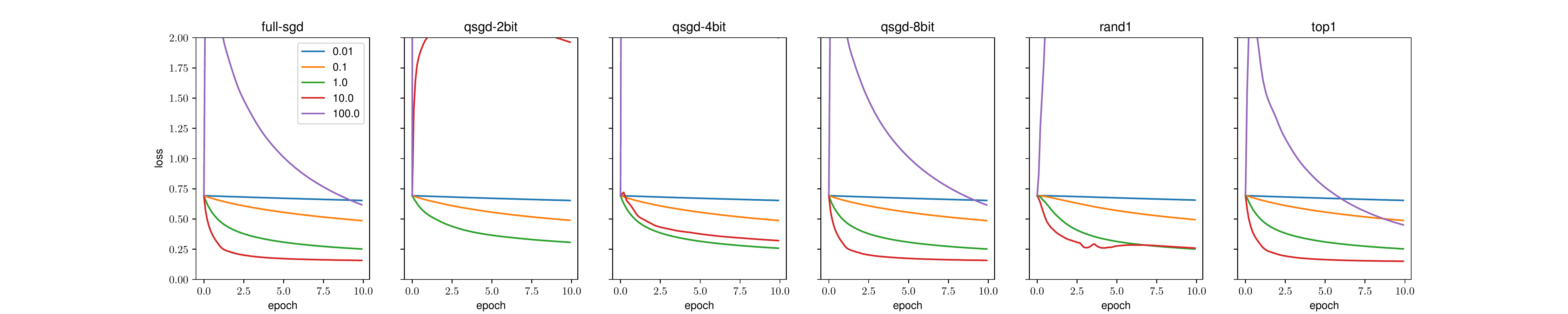}
\caption{Hyperparameter search for learning rate $\gamma_0/(1 + \gamma_0 \lambda t)^{-1}$. $\gamma_0$ corresponding to each lowest curve are used in section~\ref{ssec:exp-qsgd}. \emph{Top row:} RCV1-test dataset. \emph{Bottom row:} epsilon dataset}
\label{fig:quantize-param-search}
\end{figure}

\paragraph{QSGD communicated bits.} The number of bits needed by QSGD with $s$ quantization levels to communicate the gradient at each iteration is $\min\bigl\{
    (\lceil\log_2(s)\rceil + 1) \cdot d,\,
    3s(s+\sqrt{d}) + 32
  \bigr\}$
where $d$ is the size of the gradient. The first expression corresponds to the na{\"i}ve encoding (i.e. index/value pairs), the second expression corresponds to the estimates of the more evolved Elias encoding (see e.g.~\cite[Theorem 3.2]{Alistarh2017:qsgd}). For the sparse dataset \dataset{RCV1-test}, we additionally assume that QSGD is aware of the sparsity of the gradients ($d \approx 71 \lll 47'236$) and send only the quantized non zero coordinates with their indexes. In a nutshell, we chose the best communication pattern for QSGD to conduct a fair comparison with \textsc{Mem-SGD}.

\end{document}